\documentclass{article}
\usepackage{graphicx}
\usepackage{xcolor}
\usepackage{microtype}
\usepackage{graphicx}
\usepackage{tikz}
\usepackage{nicematrix}
\usepackage{lipsum} 

\usepackage{multirow}
\usepackage{hyperref}       %
\usepackage{url}            %
\usepackage{url}
\usepackage{tabularx}
\usepackage{colortbl}
\usepackage{amsfonts}       %
\usepackage{nicefrac}   
\usepackage{booktabs} %
\newcommand{\circo}{~\raisebox{1pt}{\tikz \draw[line width=0.6pt] circle(1.1pt);}~}
\usepackage{hyperref}

\usepackage[accepted]{icml2025}

\usepackage{amsmath}
\usepackage{amssymb}
\usepackage{mathtools}
\usepackage{amsthm}
\usepackage{enumitem}
\usepackage{subcaption}
\usepackage{bbm}
\usepackage{booktabs}
\usepackage{textcmds}
\usepackage[capitalize,noabbrev]{cleveref}

\usepackage{amsmath,amsfonts,bm}

\def\eqref#1{equation~\ref{#1}}

\def\1{\bm{1}}

\def\ra{{\textnormal{a}}}

\def\Thetav{{\mathbf{\Theta}}}
\def\Phiv{{\mathbf{\Phi}}}
\def\thetav{{\boldsymbol{\theta}}}
\def\phiv{{\boldsymbol{\phi}}}
\def\piv{{\boldsymbol{\pi}}}
\def\lambdav{{\boldsymbol{\lambda}}}
\def\Lambdav{{\boldsymbol{\Lambda}}}

\def\vzero{{\bm{0}}}

\def\vh{{\bm{h}}}

\def\vx{{\bm{x}}}
\def\vy{{\bm{y}}}
\def\vz{{\bm{z}}}

\def\mA{{\bm{A}}}
\def\mB{{\bm{B}}}
\def\mC{{\bm{C}}}

\def\mH{{\bm{H}}}
\def\mI{{\bm{I}}}

\def\mW{{\bm{W}}}
\def\mX{{\bm{X}}}
\def\mY{{\bm{Y}}}

\DeclareMathAlphabet{\mathsfit}{\encodingdefault}{\sfdefault}{m}{sl}
\SetMathAlphabet{\mathsfit}{bold}{\encodingdefault}{\sfdefault}{bx}{n}

\def\gB{{\mathcal{B}}}

\def\gL{{\mathcal{L}}}

\def\gP{{\mathcal{P}}}

\newcommand{\R}{\mathbb{R}}

\DeclareMathOperator*{\argmin}{arg\,min}

\usepackage{amssymb, amsmath, amsfonts, listings, mathrsfs, mathtools} 
\def\beq{\begin{equation}}
\def\eeq{\end{equation}}
\def\ba{\begin{array}}
\def\ea{\end{array}}
\def\la{\langle}
\def\ra{\rangle}
\newtheorem{assumption}{Assumption}
\newtheorem{remark}{Remark}
\newtheorem{proposition}{Proposition}
\newtheorem{example}{Example}
\theoremstyle{plain}
\newtheorem{theorem}{Theorem}[section]

\theoremstyle{definition}

\theoremstyle{remark}
\definecolor{generalistcolor}{HTML}{00a1ff}
\definecolor{specialistOne}{HTML}{fe9400}
\definecolor{specialistTwo}{HTML}{4e9000}

\newcommand{\highlightBlue}[1]{\colorbox{generalistcolor}{$\displaystyle #1$}}

\newcommand{\highlightOrange}[1]{\colorbox{specialistOne}{$\displaystyle #1$}}

\newcommand{\highlightGreen}[1]{\colorbox{specialistTwo}{$\displaystyle #1$}}

\usepackage[textsize=tiny]{todonotes}

\icmltitlerunning{CoMiGS: On-Device Collaborative Language Modeling}

\begin{document}

\definecolor{cbgreen}{RGB}{94, 201, 72}
\definecolor{cbgreen}{RGB}{0, 153, 136}
\definecolor{cbred}{RGB}{191, 44, 35}
\definecolor{generalist}{HTML}{ff8214}
\definecolor{specialist}{HTML}{206675}

\twocolumn[
\icmltitle{On-Device Collaborative Language Modeling  \\
via a Mixture of Generalists and Specialists}

\icmlsetsymbol{equal}{*}

\begin{icmlauthorlist}
\icmlauthor{Dongyang Fan}{equal,yyy}
\icmlauthor{Bettina Messmer}{equal,yyy}
\icmlauthor{Nikita Doikov}{yyy}
\icmlauthor{Martin Jaggi}{yyy}
\end{icmlauthorlist}

\icmlaffiliation{yyy}{EPFL, Switzerland}

\icmlcorrespondingauthor{Dongyang Fan}{dongyang.fan@epfl.ch}

\icmlkeywords{Machine Learning, ICML}

\vskip 0.3in
]

\printAffiliationsAndNotice{\icmlEqualContribution} %

\begin{abstract}
On-device LLMs have gained increasing attention for their ability to enhance privacy and provide a personalized user experience. 
To facilitate private learning with scarce data, Federated Learning has become a standard approach. However, it faces challenges such as computational resource heterogeneity and data heterogeneity among end users.
We propose CoMiGS 
(\textbf{Co}llaborative learning  with a \textbf{Mi}xture of \textbf{G}eneralists and \textbf{S}pecialists), the first approach to address both challenges. A key innovation of our method is the bi-level optimization formulation of the Mixture-of-Experts learning objective, where the router is optimized using a separate validation set to ensure alignment with the target distribution. We solve our objective with alternating minimization, for which we provide a theoretical analysis. 
Our method shares generalist experts across users while localizing a varying number of specialist experts, thereby adapting to users’ computational resources and preserving privacy.
Through extensive experiments, we show CoMiGS effectively balances general and personalized knowledge for each token generation. We demonstrate that CoMiGS remains robust against overfitting—due to the generalists' regularizing effect—while adapting to local data through specialist expertise. We open source our codebase for collaborative LLMs. 
\end{abstract}

\section{Introduction}
\label{submission}

Large Language Models (LLMs) have been showing great success serving as foundation models, evidenced by their capability to understand a wide range of tasks, such as ChatGPT~\citep{openai_chatgpt_2023}, Claude~\citep{claude2023}, Gemini~\citep{gemini2023google} and etc. However, cloud-based inference introduces significant delays for end users, and it often fails to meet their personalized needs~\citep{dingetalenhancing2024,ibm_2024_slm}. Recently, there has been growing interest in deploying LLMs on edge devices, which offer benefits like lower latency, data localization, and more personalized user experiences~\citep{xu2024ondevicelanguagemodelscomprehensive}. For instance, \citet{apple-intelligence-foundation-language-models} recently launched on-device foundation models as part of its personal intelligence system. \citet{llama3.2}, \citet{Qwen2.5} newly released lightweight models with less than 3B parameters targeting edge AI.

\begin{figure}[t!]
    \centering
    \includegraphics[width=\linewidth]{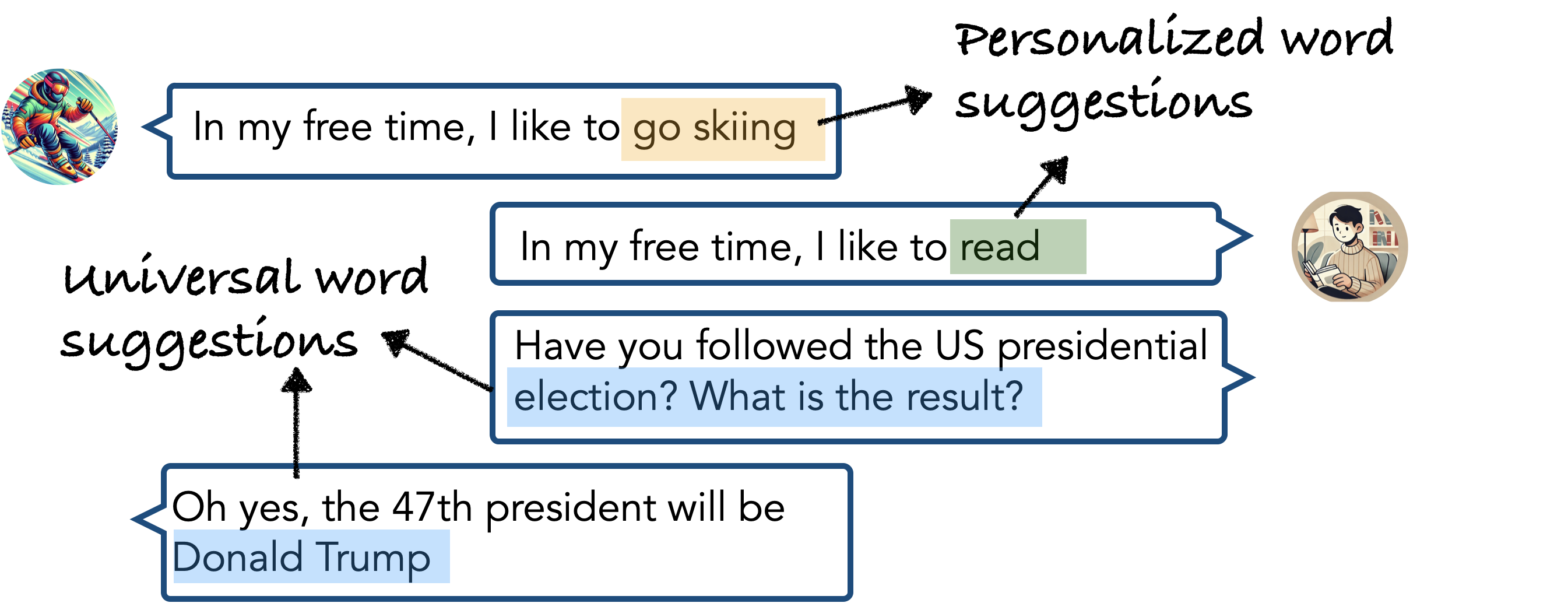}
    \caption{Chat box between two users with different characteristics. Next word prediction for smart keyboards should be tailored to users' topic preferences for personalization. However, to ensure factual accuracy and linguistic consistency, the results of next word prediction should maintain universality.}
    \label{fig:chatting-box}
    \vspace{-1em}
\end{figure}

\begin{figure}[t!]
    \centering
\includegraphics[width=\linewidth]{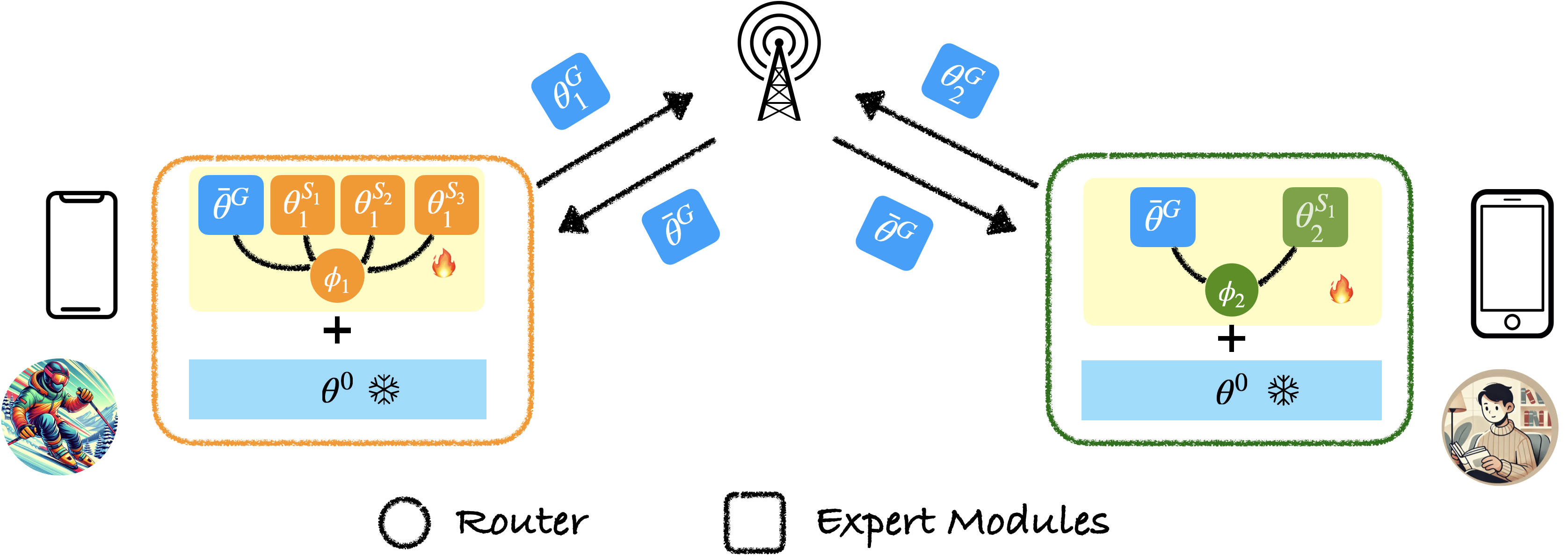}
    \caption{Diagram of our proposed method \texttt{CoMiGS} illustrated with a simplified 2-heterogenous-models setup (corresponding to the two users in Figure~\ref{fig:chatting-box}). Generalist experts ($\textcolor{white}{\highlightBlue{\thetav^{G}_1}}, \textcolor{white}{\highlightBlue{\thetav^{G}_2}} $) are aggregated across users, and specialist experts ($\textcolor{white}{\highlightOrange{\{\thetav^{S_i}_1\}_{i=1}^3}}, \textcolor{white}{\highlightGreen{\{\thetav^{S_1}_2\}}}$) and Routers ($\textcolor{white}{\highlightOrange{\phiv_1}}, \textcolor{white}{\highlightGreen{\phiv_2}}$) are kept local. }
    \label{fig:diagram}
    \vspace{-1em}
\end{figure}

On-device LLMs present challenges such as limited and variable computational resources, scarce and heterogeneous local data, and privacy concerns related to data sharing~\citep{peng-etal-2024-pocketllm, wagner2024personalizedcollaborativefinetuningondevice}. Fine-tuning is typically performed on-device to quickly adapt to users' individual needs. While data sharing is a common solution to address local data scarcity, on-device data is often privacy-sensitive and must remain on the device. To overcome this, Federated Learning has been proposed as a method for enabling collaborative learning while preserving user privacy, allowing end users to collaborate by sharing model parameters~\citep{chen2023federatedlargelanguagemodel,zhang2023towards}.

Federated fine-tuning of LLMs is predominately done through Low-Rank Adaptation (LoRA, \citet{hu2021lora}) due to its lightweight
nature so that the communication costs can be largely mitigated. Yet end devices may have different capacities, resulting in different LoRA ranks or different numbers of LoRA modules allowed on devices.  Previous works have proposed various techniques for aggregating LoRA modules of different ranks~\citep{cho2023heterogeneous, bai2024federated}. However, in both works, the devices are only equipped with shared knowledge, which makes the methods unsuitable when there is data heterogeneity across users. In such cases, a more personalized solution is needed.

End users' local data distributions can exhibit significant statistical heterogeneity. For instance, mobile device users may have distinct linguistic habits, topic preferences, or language usage patterns, leading to widely varying word distributions, as illustrated in different next word predictions to the same prompt ``In 
 my free time, I like to" in Fig.~\ref{fig:chatting-box}. As a result, personalized solutions are necessary. \citet{wagner2024personalizedcollaborativefinetuningondevice} explored three personalized collaborator selection protocols, allowing each end user to choose their collaborators. Although these protocols effectively address data heterogeneity, they depend on model aggregation, which can only occur when users share the same model architecture.

There has not yet been a solution to deal with both model heterogeneity and data heterogeneity. Towards this goal, we propose a novel \textbf{Co}llaborative learning approach via a \textbf{Mi}xture of \textbf{G}eneralists and \textbf{S}pecialists (CoMiGS). Our approach allows users to share part of the knowledge while keeping some knowledge user-specific, thus providing personalized solutions. We name the shared part \emph{generalists} and the user-specific part \emph{specialists}. Like all previous works, the generalists and specialists are simply LoRA modules. At the same time, as long as the shared part can be aggregated, the user-specific part can be of different sizes, which can be adapted to various device capacities, as illustrated by different numbers of specialists across users in Figure~\ref{fig:diagram}. 

We integrate the expertise of generalists and specialists using a learned router that determines aggregation weights, following the Mixture-of-Experts (MoE) architecture~\citep{fedus2022switchtransformersscalingtrillion}. As in typical MoE designs for language modeling~\citep{jiang2024mixtral, fan2024empiricalunderstandingmoedesign}, we also use tokens as the routing unit. Although users may have different topic preferences or linguistic styles, they should still share common phrases, for example, when talking about factual knowledge (the result of the US presidential election should be a universal fact in Figure~\ref{fig:chatting-box}). Our goal is to route these shared tokens to the generalists so they can be jointly learned across users.

We further notice a hierarchical structure between the router and the experts: the router dynamically assigns tokens based on emerging expert specializations, while the experts refine their roles to optimize token processing under the router's guidance. Towards addressing this, we formulate our learning objective as a bi-level optimization problem and propose a new first-order algorithm based on alternating minimization as a solution. Our method enjoys convergence guarantees and is resource-efficient for deployment.

In summary, our contributions are as follows:
\vspace{-3mm}
\begin{itemize}[leftmargin=2mm]
\itemsep0em 
    \item We propose a novel approach (CoMiGS) for on-device personalized collaborative fine-tuning of LLMs. Key parts of our approach are: 1) an innovative bi-level formulation of the MoE learning objective (Section~\ref{subsection: bi-level formular=tion}); 2) a new algorithm based on alternating minimization (Alg.\ref{alg}); 3) a theoretical analysis with a proof showing linear convergence rate under suitable assumptions (Section~\ref{SubsectionConvergence}).

    \item Our collaborative framework effectively addresses both \emph{data heterogeneity} (Section~\ref{subsec: data-heterogeneity}), concerning diverse local data distributions across users, and \emph{computational resource heterogeneity} (Section~\ref{sec: adaptation_to_resource_constraints}), with respect to varying local model architectures, making it the first model to accomplish both.
     
    \item Our framework separates model heterogeneity from data quantity (Section~\ref{subsection: decoupling data quantity and resource}). Users with larger local datasets benefit from a bigger model, while users with more powerful models but smaller datasets are less prone to overfitting.
    \item CoMiGS is resource-efficient: it adds marginal ($+1.25\%$) computational overhead and memory requirement compared to FedAvg, while reducing communication costs by 50\% (Section~\ref{sec: overhead}).
    \item We release a codebase\footnote{Our code base is available at \url{https://github.com/epfml/CoMiGS}}
    for collaborative LLMs that allows users to easily implement various collaboration strategies, facilitating and advancing future research efforts in this field.
    
\end{itemize}

\section{Related Work}

\paragraph{Collaborative Fine-Tuning for LLMs.}
Recently, researchers have been investigating the application of Federated Learning in language tasks. Due to the substantial number of model parameters in LLMs, the research has largely targeted the stages following pre-training, often utilizing parameter-efficient techniques such as adapters. 
\citet{mohtashami2023social} explored a teacher-student social learning framework to aggregate private-sensitive instructions. \citet{zhang2023towards} directly applied FedAvg~\citep{mcmahan2017communication} to aggregate LoRA parameters during instruction tuning, and reported increased performance in downstream tasks. Following that, there are various works focusing on addressing resource heterogeneity where users are equipped with different LoRA ranks. HetLoRA~\citep{cho2023heterogeneous} and FlexLoRA~\citep{bai2024federated} provide different ways of aggregating and distributing LoRA modules of heterogenous ranks. However, these approaches are not designed to cope with heterogeneous data on device. In contrast, \citet{sun2024improving} demonstrated that freezing LoRA A matrices at initialization leads to improved performance on heterogeneous data. Building on this, \citet{guo2024selectiveaggregationlowrankadaptation} showed that consistently aggregating LoRA A matrices can yield even greater performance gains. Meanwhile, \citet{wagner2024personalizedcollaborativefinetuningondevice} introduced personalized approaches to effectively address data heterogeneity by employing three distinct collaborator selection mechanisms. However, these approaches require all users to utilize the same model architecture. Unlike previous works, our framework deals with both model heterogeneity and data heterogeneity. Moreover, our method offers personalized solutions at a token level, as opposed to the client-level approach in \citet{wagner2024personalizedcollaborativefinetuningondevice}.

\vspace{-1em}
\paragraph{Mixture of Global and Local Experts.}  
\citet{gaspar2022glolloc} introduced a fusion of global and local experts for activity prediction based on molecular structures. Each local expert is tailored to a specific chemical series of interest using loss masking, while a global expert is trained across all series. Simultaneously, a routing network learns to assign soft merging scores. This approach yielded superior empirical results compared to single experts. 
\citet{dai2024deepseekmoe} developed DeepSeekMoE by deterministically assigning every token to \qq{shared} experts, whereas \qq{routed} experts are assigned tokens based on a learnable router. DeepSeekMoE is able to approach the upper bound performance for MoE models. For both works, the notion of shared/global is with respect to input samples, i.e. a shared/global expert should see all input samples. In a collaborative setup, FDLoRA~\citep{qi2024fdlora} learns dual LoRA modules on each client to capture personalized and global knowledge respectively. Client-wise fusion weights are learned towards the end of training to combine the two sets of knowledge.  In comparison, pFedMoE~\citep{yi2024pfedmoe} jointly learns a shared homogeneous small feature extractor, a localized heterogeneous feature extractor, and a localized routing network in an end-to-end fashion, demonstrating strong performance in the vision domain. Our work is closely related to these two works, while introducing key innovations that adapt to model heterogeneity and allow for fine-grained adaptation to target distributions.

\vspace{-1em}
\section{Method}

We aim to improve personalized performance for each end user, through a fine-grained mixture of general and personalized knowledge. Building on the hierarchical insights of MoE learning, we formulate our learning objective into a bi-level optimization problem, where expert parameters are learned using the relatively large-sized training sets, while routing parameters are updated using the small-sized validation sets. We further let experts diversify into generalists and specialists via parameter aggregation or localization.
As the problem solver, we provide a multi-round gradient-based algorithm, of which the pseudo codes are presented in Appendix~\ref{appendix - alg}.

\vspace{-2mm}
\subsection{Notions and Problem Setup}

Each user has a training set $\mX_i^{\text{train}}$, a small validation set $\mX_i^{\text{valid}}$ and a test set $\mX_i^{\text{test}}$, and the task is next token prediction. The validation set $\mX_i^{\text{valid}}$ and the test set $\mX_i^{\text{test}}$ are sampled from the same distribution $\gP_i^{\text{target}}$ (note this is a fuzzy concept in the language domain, by the same distribution we mean from the same topic/category). The training set, $\mX_i^{\text{train}}$, can be sampled from a different distribution than $\gP_i^{\text{target}}$. This is to address scenarios where distribution shifts may occur over time, such as changes in topics reflected in the typing data of mobile phone users.
As illustrated in Figure~\ref{fig:diagram}, there are two sets of model parameters within each user: expert parameters, denoted as $\Thetav = \thetav^G \cup \{\thetav_i^S\}$, where $\thetav^G$ is shared across the users and $\{\thetav_i^S\}$ are user-specific specialist parameters; and routing parameters, denoted as $\Phiv = \{\phiv_i\}$. $i \in \{1,2,..,N\}$ is the user index. We use linear models as our routers. Thus, $\phiv_i \vx \in \R^{n_i}$ produces gating values which are used to combine the $n_i$ experts at each layer, as in (\ref{eq: softmax gating}), where $\vx$ is an input token, $\vy$ is the corresponding layer output and $E_j$ is the $j$th expert.
\begin{equation}
\small
\label{eq: softmax gating}
    p_j(\vx) = \frac{(\exp{\phiv_i \vx)_j}}{\sum_{k}(\exp{\phiv_i \vx)_k}},\: \vy = \sum_{j=1}^{n_i} p_j(\vx) E_j(\vx)
\end{equation}

Our experts are simply LoRA modules, which approximate model updates $\Delta \mW \in \R^{m\times n}$ with a multiplication of two low-rank matrices $\mA \in \R^{m\times r}$ and $\mB \in \R^{r\times n}$ with rank $r \ll m,n$. $\thetav^G$ and $\thetav^S$ are disjoint sets of LoRA A and B matrices. 

\subsection{A Bi-Level Formulation}
\label{subsection: bi-level formular=tion}
Instead of learning routing and expert parameters simultaneously like the conventional way in LLMs~\citep{zoph2022st, fedus2022switch}, we update the two sets of parameters in an alternating fashion. We observe \emph{a natural hierarchy between the experts and the router}: the assignment of tokens to experts depends on the router's outputs, while the experts' parameters are updated based on the assigned tokens. In this way, the experts' development follows the router's decisions, establishing an inherent leader-follower structure. Following \citet{von2010market}, we formulate the hierarchical problem as a bi-level optimization objective as follows:

\begin{align}
\vspace{-3mm}
\small
& \min_{\Phiv}   \sum_i \gL (\mX_i^{\text{valid}}, \Thetav^\star(\Phiv)%
   , \phiv_i) \tag{upper} \label{eq: bilevel formlation-upper}\\
 & s.t. \;  \Thetav^\star(\Phiv)
    \in \argmin_{\Thetav} \sum_i \gL (\mX_i^{\text{train}}, \thetav^G, \thetav^S_i, \phiv_i) \tag{lower} \label{eq: bilevel formlation-lower}
    \vspace{-3mm}
   \end{align}
where $\gL$ is the language modeling loss. 
The routing parameters $\Phiv = \{\phiv_i\}$ are updated based on the validation loss, which reflects the target distribution (upper optimization), while the expert parameters $\Thetav = \thetav^G \cup \{\thetav^S_i\}$ are updated using the training loss (lower optimization). This formulation further brings in the following benefits: 1) routing parameters are smaller in size, making them easier to overfit. By separating the two losses, the routing parameters can be updated less frequently using the smaller validation set (visual evidence of less frequent router update leading to improved performance is provided in Figure~\ref{fig:sweep-slimpajama} in the Appendix); 2) this approach handles situations where target distributions differ from training distributions more effectively, as the router outputs (i.e., how the experts should be weighted) can be tailored to specific tasks.

\subsection{Our Algorithm}
\label{SubsectionAlgorithm}

To solve our bi-level problem, we use alternating updates of the two sets of parameters. The pseudo-code of our proposed algorithm is detailed in Alg.\ref{alg} in the Appendix.

\vspace{-1em}
\paragraph{Alternating Update of $\Thetav$ and $\Phiv$.} Alternating update of two sets of parameters is a standard way to solve bi-level optimization problems~\citep{chen2021closing}. The alternating updates of expert and routing parameters are performed using local training and validation sets separately. To simplify notations, we denote $f_{\text{valid}}(\Thetav, \Phiv) \coloneqq \sum_i \gL (\mX_i^{\text{valid}}, \thetav^G, \thetav_i^S, \phiv_i)$ and $f_{\text{train}}(\Thetav, \Phiv) \coloneqq \sum_i \gL (\mX_i^{\text{train}},\thetav^G, \thetav_i^S, \phiv_i)$. 
Note that in contrast to (\ref{eq: bilevel formlation-upper}) bi-level formulation, we allow parameter $\Thetav$ to be free in $f_{\text{valid}}$,
which makes it easier to optimize.
We can write the alternating update steps as follows. 
\begin{equation}
    \label{eq: bi-level-upper-update}
 \Phiv_{k + 1}    =  \argmin\limits_{\Phiv} f_{\text{valid}}(\Thetav_k, \Phiv),
 \end{equation}
 \vspace{-4mm}
 \begin{equation}
 \label{eq: bi-level-lower-update}
\Thetav_{k + 1}   =  \argmin\limits_{\Thetav} f_{\text{train}}(\Thetav, \Phiv_{k + 1}).
\end{equation}

Since the updates of $\Thetav$ and $\Phiv$ are disentangled, they do not need to be updated at the same frequency. The routing parameters are smaller in size and thus can be updated less frequently. When updating model parameters, we include an additional load-balancing term as in~\citet{fedus2022switch}, which is standard in MoE implementation and encourages even distribution of token assignments to experts. A discussion over the load balancing term is included in Appendix~\ref{append: load balancing loss}. It is observed that a load-balancing term can improve test performance compared to not having one. However, directing more tokens to the generalists has no noticeable effect. 

Given that the data is distributed among clients,
when solving optimization problem from equation (\ref{eq: bi-level-lower-update}),
we first obtain the solutions $\thetav_i^G$ and $\thetav_i^S$ to local problems,
for each client~$i$. A parameter aggregation is then performed on the user-specific $\thetav_i^G$ via a trusted server to establish a shared $\thetav_G$ across all users. 
\begin{equation}
    \begin{aligned}
\left\{\tilde{\thetav}_i^{G,k+1},\tilde{\thetav}_i^{S,k+1}\right\}_{i=1}^N =  \argmin\limits_{\Thetav} f_{\text{train}}(\Thetav, \Phiv_{k + 1}),  \\
    \Thetav^{k+1} = \biggl(\frac{1}{N} \sum_i \tilde{\thetav}_i^{G,k+1},\{\tilde{\thetav}_i^{S,k+1}\} \biggr). 
    \end{aligned}
\end{equation}
In the next round, each user replaces their $\thetav_i^G$ with the global $\thetav_G$, while their $\thetav_i^S$ remains local.

\subsection{Convergence Results}
\label{SubsectionConvergence}

We study the convergence properties of our alternating minimization
process.
First, we establish a linear rate of convergence
under general assumptions on our objectives, that always hold \textit{locally}, when the parameters are close to the training solution (assuming the pretrained model is not far from the fine-tuned models).
Then, we show that in the case of \textit{linear experts}, the same optimization procedure possesses \textit{global} linear convergence.

We denote partial minimization operators from 
 (\ref{eq: bi-level-upper-update}), (\ref{eq: bi-level-lower-update})
 by
$$
\vspace{-1mm}
\ba{rcl}
u_1(\Thetav) & \coloneqq &  \argmin\limits_{\Phiv} f_{\text{valid}}(\Thetav, \Phiv), \\[5pt]
u_2(\Phiv) & \coloneqq &  \argmin\limits_{\Thetav} f_{\text{train}}(\Thetav, \Phiv),
\ea
\vspace{-1mm}
$$
and their compositions by $T \coloneqq u_2 \circo u_1 $ and $P \coloneqq u_1 \circo u_2$.
Note that both $T$ and $P$ act on the corresponding spaces of $\Thetav$ and $\Phiv$: $T: \R^{|\Thetav|} \to \R^{|\Thetav|}$ and $P: \R^{|\Phiv|} \to \R^{|\Phiv|}$.

\begin{assumption}[Shared Optima]
\label{Assumption-Fixed}
	There exist $\Thetav^{\star}$ and $\Phiv^{\star}$ such that
	\beq \label{FixedPoint}
    \vspace{-1mm}
	\ba{rcl}
	\Thetav^{\star} & = & T(\Thetav^{\star}) \qquad \text{and} \qquad
	\Phiv^{\star} \;\; = \;\; P(\Phiv^{\star}).
	\ea
    \vspace{-1mm}
	\eeq
\end{assumption}

Eq.~(\ref{FixedPoint}) means that $f_{\text{valid}}$ and $f_{\text{train}}$ share the same global optima, which is reasonable when the train and validation data are similar, $\mX_i^{\text{train}} \sim \mX_i^{\text{valid}}$, and, hence, $f_{\text{valid}} \approx f_{\text{train}}$. It also holds for overparametrized models, such as LLMs.

\begin{assumption}[Contraction Property]
\label{Assumption-Contraction}
	Let $u_1$ and $u_2$ be Lipschitz with some constants $\lambda_1, \lambda_2 > 0$,
        for any $\Thetav, \bar{\Thetav}$ and $\Phiv, \bar{\Phiv}$:
	\beq \label{Contraction}
	\ba{rcl}
	\| u_1(\Thetav) - u_1(\bar{\Thetav}) \| & \leq & 
        \lambda_1 \|\Thetav - \bar{\Thetav} \|, \\[7pt]
	\| u_2(\Phiv) - u_2(\bar{\Phiv}) \| & \leq & 
        \lambda_2 \| \Phiv - \bar{\Phiv} \|.
	\ea
	\eeq
\end{assumption}

\begin{theorem}[Convergence under Contraction] \label{Theorem-Convergence}
    If Assumptions~\ref{Assumption-Fixed},~\ref{Assumption-Contraction} hold, 
    and $\lambda_1 \cdot \lambda_2 < 1$, then
    the weights $(\Thetav_k, \Phiv_{k})$ generated by alternating updates~(\ref{eq: bi-level-upper-update}), (\ref{eq: bi-level-lower-update}) converge to $(\Thetav^\star, \Phiv^\star)$ with a linear rate. 
\end{theorem}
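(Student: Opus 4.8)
The plan is to recognize that the alternating scheme of~(\ref{eq: bi-level-upper-update})--(\ref{eq: bi-level-lower-update}) is exactly the fixed-point iteration of the composed operator $T = u_2 \circo u_1$ (and, for the router, of $P = u_1 \circo u_2$), and then to run a Banach-fixed-point argument: show $T$ is a contraction, conclude geometric decay of $\Thetav_k$ toward the fixed point, and transport the rate to $\Phiv_k$.

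Concretely, I would first unfold the recursion. By definition of $u_1, u_2$, the updates read $\Phiv_{k+1} = u_1(\Thetav_k)$ and $\Thetav_{k+1} = u_2(\Phiv_{k+1}) = u_2\bigl(u_1(\Thetav_k)\bigr) = T(\Thetav_k)$, so $\Thetav_k = T^k(\Thetav_0)$, and similarly $\Phiv_{k+1} = P(\Phiv_k)$ for $k \ge 1$. Next, I would chain the Lipschitz bounds of Assumption~\ref{Assumption-Contraction}: for any $\Thetav, \bar{\Thetav}$, $\| T(\Thetav) - T(\bar{\Thetav}) \| = \| u_2(u_1(\Thetav)) - u_2(u_1(\bar{\Thetav})) \| \le \lambda_2 \| u_1(\Thetav) - u_1(\bar{\Thetav}) \| \le \lambda_1 \lambda_2 \| \Thetav - \bar{\Thetav} \|$, and symmetrically $\| P(\Phiv) - P(\bar{\Phiv}) \| \le \lambda_1 \lambda_2 \| \Phiv - \bar{\Phiv} \|$. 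Since $\lambda_1 \lambda_2 < 1$, both $T$ and $P$ are contractions, hence each has a \emph{unique} fixed point; by Assumption~\ref{Assumption-Fixed} these are $\Thetav^\star$ and $\Phiv^\star$. Finally, applying the contraction inequality along the iterates together with $\Thetav^\star = T(\Thetav^\star)$ gives $\| \Thetav_k - \Thetav^\star \| = \| T(\Thetav_{k-1}) - T(\Thetav^\star) \| \le \lambda_1 \lambda_2 \| \Thetav_{k-1} - \Thetav^\star \|$, so by induction $\| \Thetav_k - \Thetav^\star \| \le (\lambda_1 \lambda_2)^k \| \Thetav_0 - \Thetav^\star \|$; the identical argument for $P$ (or, equivalently, feeding the $\Thetav$-bound into the $u_1$-Lipschitz estimate, using $\Phiv^\star = u_1(\Thetav^\star)$) yields $\| \Phiv_{k+1} - \Phiv^\star \| \le \lambda_1 (\lambda_1 \lambda_2)^k \| \Thetav_0 - \Thetav^\star \|$. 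Both components decay geometrically with ratio $\lambda_1 \lambda_2 < 1$ per outer round, which is the claimed linear convergence of $(\Thetav_k, \Phiv_k)$ to $(\Thetav^\star, \Phiv^\star)$.

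There is no deep obstacle here; the one point that deserves care is the \emph{identification of the limit}. The contraction property only guarantees existence and uniqueness of fixed points of $T$ and $P$, so one must argue — as above — that the pair $(\Thetav^\star, \Phiv^\star)$ supplied by Assumption~\ref{Assumption-Fixed} genuinely coincides with the one the iteration approaches (in particular that $u_1(\Thetav^\star) = \Phiv^\star$ and $u_2(\Phiv^\star) = \Thetav^\star$, which follows because $u_1(\Thetav^\star)$ is a fixed point of $P$ and $u_2(\Phiv^\star)$ is a fixed point of $T$). This step also tacitly relies on the partial-minimization operators $u_1, u_2$ being well-defined single-valued maps, which is already built into the statement of Assumption~\ref{Assumption-Contraction}. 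A minor secondary point is the index offset between $\Thetav_k$ and $\Phiv_{k+1}$, which only produces the harmless extra factor $\lambda_1$ in the router estimate and does not affect the rate.
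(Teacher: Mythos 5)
Your proof is correct and takes essentially the same route as the paper's: both view the alternating updates as the fixed-point iteration of the composed map $T = u_2 \circ u_1$ (and $P = u_1 \circ u_2$), chain the two Lipschitz constants to get the contraction factor $\lambda_1\lambda_2 < 1$, and then iterate the contraction inequality against the fixed point supplied by Assumption~\ref{Assumption-Fixed} to obtain the geometric rate. Your extra care about uniqueness of the fixed points and the compatibility $u_1(\Thetav^{\star}) = \Phiv^{\star}$ makes explicit a detail the paper glosses over when it says ``the same reasoning can be applied to $\{\Phiv_k\}$,'' but it does not change the substance of the argument.
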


The proof is provided in Appendix~\ref{SubsectionContraction}. We show that the contraction property holds
when the objectives are convex quadratics. As a consequence, it also holds
locally for any sufficiently smooth models, using their Taylor expansions
around a local minimum. Large models are usually smooth, and when initialized with a well-pre-trained model, it suffices to guarantee local convergence. 

Our alternating minimization process can guarantee global convergence as well, when the experts are linear models, see Appendix~\ref{SubsectionStronglyConvexRate}. This indicates a wide applicability of solving MoE objectives via alternating minimization. 

\begin{theorem}[Global Convergence for Linear Experts] \label{Theorem-Global-Convergence}
    If $f_{\text{valid}} = f_{\text{train}}$ and all the expert modules are linear models, we have a global linear convergence rate for a practical instance of our method.
\end{theorem}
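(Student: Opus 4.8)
The plan is to reduce Theorem~\ref{Theorem-Global-Convergence} to Theorem~\ref{Theorem-Convergence} by verifying that, when $f_{\text{valid}} = f_{\text{train}} =: f$ and all experts are linear, the alternating minimization map is a genuine contraction with a product of Lipschitz constants strictly below $1$. First I would pin down the ``practical instance'': with linear experts $E_j(\vx) = \mW_j \vx$ and softmax gating from (\ref{eq: softmax gating}), the layer output is $\vy = \sum_j p_j(\vx) \mW_j \vx$. If we additionally freeze the router logits' dependence to the trained point during each lower step and perform the standard partial-minimization scheme, the objective $f$ as a function of $\Thetav$ alone (with $\Phiv$ fixed) is a convex quadratic in the stacked LoRA parameters --- because, for fixed routing weights $p_j(\vx)$, the prediction is linear in $\{\mW_j\}$ and hence in the LoRA factors when one of $\mA,\mB$ is held fixed (this is exactly the ``freeze $A$'' style instance that the related-work discussion references). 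Symmetrically, with $\Thetav$ fixed the router subproblem is the minimization of a smooth strongly convex surrogate (cross-entropy / squared loss composed with softmax plus the load-balancing regularizer), which again admits a unique minimizer depending smoothly on $\Thetav$.

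Next I would invoke the remark already made after Theorem~\ref{Theorem-Convergence}: when both $f_{\text{valid}}$ and $f_{\text{train}}$ are convex quadratics, the partial-minimization operators $u_1, u_2$ are \emph{affine} maps, so they are globally Lipschitz with constants $\lambda_1, \lambda_2$ equal to the operator norms of the associated linear parts. Concretely, for a jointly quadratic $f(\Thetav,\Phiv) = \tfrac12 (\Thetav^\top,\Phiv^\top) H (\Thetav^\top,\Phiv^\top)^\top + \text{lin.}$ with $H = \begin{pmatrix} H_{\Thetav\Thetav} & H_{\Thetav\Phiv} \\ H_{\Phiv\Thetav} & H_{\Phiv\Phiv}\end{pmatrix}$, one has $u_1(\Thetav) = -H_{\Phiv\Phiv}^{-1} H_{\Phiv\Thetav}\Thetav + \text{const}$ and $u_2(\Phiv) = -H_{\Thetav\Thetav}^{-1}H_{\Thetav\Phiv}\Phiv + \text{const}$, so $\lambda_1 = \|H_{\Phiv\Phiv}^{-1}H_{\Phiv\Thetav}\|$ and $\lambda_2 = \|H_{\Thetav\Thetav}^{-1}H_{\Thetav\Phiv}\|$. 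The product $\lambda_1\lambda_2 < 1$ is then equivalent to a Schur-complement / positive-definiteness condition on $H$ (equivalently, $\rho(H_{\Thetav\Thetav}^{-1}H_{\Thetav\Phiv}H_{\Phiv\Phiv}^{-1}H_{\Phiv\Thetav}) < 1$), which holds whenever $H \succ 0$, i.e. whenever the combined quadratic is strictly convex --- guaranteed here by the load-balancing regularizer on $\Phiv$ and a ridge/LoRA-rank conditioning on $\Thetav$. Assumption~\ref{Assumption-Fixed} (shared optima) is immediate since $f_{\text{valid}} = f_{\text{train}}$ forces $T(\Thetav^\star) = \Thetav^\star$ and $P(\Phiv^\star) = \Phiv^\star$ at the joint minimizer. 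Thus both hypotheses of Theorem~\ref{Theorem-Convergence} are met and global linear convergence follows.

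The remaining gap --- and the step I expect to be the main obstacle --- is that with the genuine \emph{softmax} gating the objective is \emph{not} globally quadratic in $\Phiv$: $p_j(\vx)$ is a nonlinear (though smooth and bounded) function of $\phiv_i$. I would handle this in one of two ways. The cleaner route is to define the ``practical instance'' so that the router update (\ref{eq: bi-level-upper-update}) is the exact minimizer of a quadratic model of the validation loss --- e.g. a single damped Gauss--Newton / natural-gradient step, which is what one actually implements --- making $u_1$ affine by construction; then the argument above applies verbatim. The alternative, staying with full softmax, is to exploit that the softmax Jacobian is uniformly bounded ($\|\partial p/\partial \phi\| \le \tfrac14 \sup\|\vx\|$) and that the load-balancing term makes $f(\Thetav,\cdot)$ strongly convex on the relevant compact set, so $u_1$ is Lipschitz with a constant controlled by the ratio of these quantities; one then needs to check this constant times $\lambda_2$ is $<1$, which requires either a small step/learning-rate in the router update or a large enough regularization weight. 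I would present the Gauss--Newton instantiation as the main proof and note the softmax case as a corollary under a step-size condition, so that the theorem statement ``a practical instance of our method'' is honestly matched by the construction. The write-up would then consist of: (i) specifying the instance; (ii) writing $f$ in the quadratic/affine-operator form and identifying $\lambda_1,\lambda_2$; (iii) checking $\lambda_1\lambda_2<1$ via strict convexity from the regularizers; (iv) checking Assumption~\ref{Assumption-Fixed}; (v) citing Theorem~\ref{Theorem-Convergence}.
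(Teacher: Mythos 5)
Your plan takes a genuinely different route from the paper, and the obstacle you flag at the end is exactly where it breaks down. You try to reduce Theorem~\ref{Theorem-Global-Convergence} to Theorem~\ref{Theorem-Convergence} by verifying the contraction property, treating the alternating maps $u_1, u_2$ as affine operators coming from a jointly quadratic objective. But the paper does not reduce to Theorem~\ref{Theorem-Convergence} at all, and for good reason: with softmax gating the prediction $\sum_j \pi^{(j)}_{\Phiv}(\vx_i)\,\langle \thetav^j, \vx_i\rangle$ is a \emph{bilinear-through-softmax} coupling of $\Thetav$ and $\Phiv$, so $f$ is never globally quadratic in the joint variable, and the product structure is precisely what prevents you from writing down Lipschitz constants $\lambda_1,\lambda_2$ that are small uniformly over the whole domain. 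Your two proposed repairs both change the algorithm rather than the analysis (a Gauss--Newton surrogate in place of the true argmin, or a step-size/regularization regime that is nowhere in the theorem statement), and neither one recovers the ``exact partial minimization'' process~(\ref{eq: bi-level-upper-update})--(\ref{eq: bi-level-lower-update}) the theorem is about.

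The idea you are missing is the \emph{decoupling with latent routing variables}. The paper introduces per-sample simplex variables $\lambdav^i \in \Delta_N$ and replaces the hard constraint $\lambdav^i = \piv_{\Phiv}(\vx_i)$ by a KL penalty weighted by $\mu$, yielding the objective $F_{\mu}(\Thetav,\Phiv,\Lambdav)$ in~(\ref{RelaxedProblem2}). This makes the prediction term $\ell_i(\langle \lambdav^i, \Thetav^\top \vx_i\rangle)$ linear in $\Thetav$ for fixed $\lambdav^i$ (no softmax inside), while the only place $\Phiv$ appears is through the convex log-sum-exp and the bilinear term $-\mu\langle\lambdav^i,\Phiv^\top\vx_i\rangle$. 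The Proposition in Appendix~\ref{SubsectionLinear} then shows that, for $\alpha$ large enough relative to $\mu$ and $\sup\|\vx\|$, the bilinear cross terms are dominated by the $\frac{\alpha}{2}\|\cdot\|_F^2$ and entropy regularizers, and $F_\mu$ becomes \emph{jointly strongly convex}. With joint strong convexity and a Lipschitz gradient in hand, the paper's convergence proof (Appendix~\ref{SubsectionStronglyConvexRate}) is a direct functional-residual argument giving $f(\mX_k,\mY_k) - f^\star \le (1-\mu/L)^{2k}\bigl(f(\mX_0,\mY_0)-f^\star\bigr)$; it neither uses the contraction assumption nor Assumption~\ref{Assumption-Fixed}. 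So the correct reading of ``a practical instance of our method'' is the three-block alternation~(\ref{ProcessDecoupled}) over $(\Lambdav,\Phiv,\Thetav)$ on the decoupled objective, not a Gauss--Newton router update. Without the latent-variable relaxation, the route you sketch cannot handle the softmax nonlinearity, and the claim ``both hypotheses of Theorem~\ref{Theorem-Convergence} are met'' is not actually established.
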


\section{Experiments}
\label{sec: experiments}

\subsection{Setup}
\subsubsection{Datasets}

We selected the following datasets to demonstrate the efficacy of our proposed algorithm: 1) \emph{Multilingual Wikipedia}: Wikipedia articles in four languages: German, French and Italian from \citet{wikidump}, and Dutch from \citet{wiki-40b}; 2) \emph{SlimPajama}: We pick the following four categories -- StackExchange, Github Codes, ArXiv, Book from \citet{cerebras2023slimpajama};
3) \emph{AG News}: News from categories of World, Sports, Business, and Sci/Tech~\citep{zhang2016characterlevel}. 4) \emph{Common Corpus}~\citep{commoncorpus}:  specifically the following three categories --  YouTube-Commons, Public Domain Books, and EU Tenders collections, and the Harvard US Patent dataset from \citet{suzgun2022hupd}.

A distinct category is assigned to a user, as it simulates the most challenging scenario for collaboration.  
Given our emphasis on next token prediction, we anticipate shared predictions among users while maintaining category-specific distinctions. We further create the following two scenarios to showcase the wide applicability of our method:

\vspace{-1em}
\paragraph{In-Distribution Tasks.} For each user, we construct validation and test datasets that follow the same distribution as the training data. We address two scenarios in this context: (i) variation in language usage across users (Multilingual Wikipedia), and (ii) variation in topic coverage across users (SlimPajama, Common-Corpus).

\vspace{-1em}
\paragraph{Out-of-Distribution Tasks.}
For each user, we create validation and test datasets from a distribution different from the training data. During training, each user is assigned a single category, but their validation and test sets consist of a uniform mixture of all categories. This approach accounts for potential shifts in topics within users.

\subsubsection{Experimental Details}\label{sec:experimental_details}

  We choose the following base model architectures: GPT2 (124M, English only) and Llama 3.2(1B, Multilingual)\footnote{We adopt the codes from \url{https://github.com/karpathy/nanoGPT} and \url{https://github.com/danielgrittner/nanoGPT-LoRA}, \url{https://github.com/pjlab-sys4nlp/llama-moe}}. We incorporate LoRA modules into every linear layer, including MLP and Self-Attention Layers, following the recommendations of \citet{fomenko2024note}. A routing mechanism is exclusively implemented atop MLP layers. 
  The number of LoRA experts in MLP blocks depends on the local resource abundance. For more experimental details, we refer readers to Appendix~\ref{app:sec:experimental_details}.

\subsection{Data-Driven Selection: Generalist vs. Specialist}
\label{subsec: data-heterogeneity}
\vspace{-1mm}

We start by equipping users with the same model architecture locally, to illustrate the effectiveness of our hierarchical learning of routing and expert parameters. We compare our one generalist one specialist (\texttt{CoMiGS-1G1S}) method to the following baselines. In order to match the trainable parameter count of our method, we use 2 times LoRA modules within each user. 

\vspace{-1em}
\begin{itemize}
[leftmargin=*]
\itemsep0em 
\item \emph{Upper and lower bounds}:
1) \texttt{Pretrained}: Pretrained checkpoints. 
 2) \texttt{Centralized}: A single model trained using data from all users. (Note this method is an unrealistic baseline as data cannot leave the devices due to privacy concerns.)

\item \emph{Baselines}:
1) \texttt{Local}: Training individually using only local data.
  2) \texttt{FedAvg}: Aggregating LoRA parameters across users using uniform weights, which is equivalent to applying FedAvg~\citep{mcmahan2017communication}.
 3) \texttt{PCL}: Aggregating LoRA parameters using a client-level collaboration graph. The graph is updated using validation performances. (Strategy 2 in \citet{wagner2024personalizedcollaborativefinetuningondevice}).
 4) \texttt{pFedMoE}: We directly apply the method from \citet{yi2024pfedmoe} in the language domain where we update routing and expert parameters at the same time and choose tokens as a routing unit.
 5) \texttt{FDLoRA}: Global and local parameters are learned, and a client-wise fusion weight is learned to combine global and local parameters. \citep{qi2024fdlora}

\item \emph{Ablations}:
1) \texttt{CoMiGS-2S}: Both experts are specialists, meaning their weights are neither shared nor aggregated. The routing parameters are updated using a separate validation set like in \texttt{CoMiGS-1G1S}.
  2) \texttt{CoMiGS-2G}: Both experts are generalists, meaning their weights are always shared and aggregated. The routing parameters are updated like in \texttt{CoMiGS-1G1S}.

\end{itemize}

\vspace{-1em}
\subsubsection{Result Analysis}
The comparison between our method and the baseline methods is summarized in Table~\ref{tab:1g1l}. 

\newcolumntype{a}{>{\columncolor{gray!8}}l}
\newcolumntype{b}{>{\columncolor{gray!25}}l}

\begin{table}[t!]
\caption{Mean (std) test perplexity over the users with homogeneous models, averaged across 3 seeds (the lower the better). Light grey denotes in-distribution tasks and dark grey denotes out-of-distrition tasks. }
\vspace{-1em}
\begin{center}
\resizebox{\linewidth}{!}{ 
\begin{sc}
\begin{tabular}
{l  a   a  b  a b} 
 \toprule
 \emph{Base Model} & \multicolumn{3}{c}{\textbf{GPT2-124M}} &  \multicolumn{2}{c}{\textbf{Llama3.2-1B}} \\
\cmidrule(lr){2-4} \cmidrule(lr){5-6}
 \emph{Dataset} &  \emph{Multilingual} & \emph{SlimPajama} & \emph{AG News} & \emph{Com-Corpus} & \emph{AG News} \\
 \midrule
 \emph{Pretrained} 
&156.12 & 37.19& 90.65 & 30.40 & 29.37 \\ 

  \emph{Centralized} 

& 55.41 (0.12) 
& 19.53 (0.14) 
& 28.19 (0.52) & 17.97 (0.19) & 16.12 (0.05) 
  \\
\arrayrulecolor{black!30}\midrule 

 \emph{Local} 
 
& 54.38 (0.32) 
& 26.95 (0.14) 
& 41.46 (0.06) 
& 20.19 (0.11) 
& 19.96 (0.01) 
 \\

 \emph{FedAvg} 
& 58.80 (0.34)
& 23.27 (0.05) 
& 31.84 (0.02)
& 21.95 (0.11)
& 15.86 (0.05)
 \\

\emph{PCL} 

& 54.53 (0.19) 
& 26.99 (0.19) 
& 32.25 (0.12) 
& 19.65 (0.03)
& 16.84 (0.05)
  \\

\emph{pFedMoE} 
 
& 52.27  (0.17)
& 25.40  (0.09)
& 38.72 (0.21)
& 20.41 (0.05)
& 17.84 (0.05)
   \\

\emph{FDLoRA}
& 57.45 (0.81)
& 22.71 (0.40) 
& 33.61 (0.07) 
& 22.11 (0.05)
& 16.64 (0.02)
\\

\arrayrulecolor{black!30}\midrule 
 \emph{CoMiGS - 2S }
& \textbf{46.36 (0.16)}
& 22.51 (0.08) 
 & 35.81 (0.13) 
 & 18.46 (0.13) 
 & 18.03 (0.11)
  \\  
 
 \emph{CoMiGS - 2G } 
& 58.31 (0.17) 
& \textbf{21.36 (0.01)} 
& \textbf{31.18 (0.05)} 
& 20.18 (0.09) 
& \textbf{15.41 (0.05)} 
 \\ 

 \emph{CoMiGS - 1G1S } 
 
& 47.19 (0.10)
& 21.79 (0.04)
& 33.53 (0.03)
& \textbf{18.37 (0.03)} 
& 16.31 (0.05)
 \\ 
 \arrayrulecolor{black}\bottomrule \\
\end{tabular}
\end{sc}}

\label{tab:1g1l}
\end{center}
\vspace{-1em}
\end{table}

\vspace{-1em}
\paragraph{Effectiveness of Our Routing Mechanism.} 
Depending on the dataset, either \texttt{CoMiGS-2G} or \texttt{CoMiGS-2S} achieves the highest performance. The key distinction compared to \texttt{Local} or \texttt{FedAvg} is the existence of a layer-wise token-level router, which learns to combine the two generalists or specialists. This emphasizes while knowledge might be present, the way it's combined is the key. Moreover, \texttt{pFedMoE}, despite having a learned router as well, underperforms our method, even in the in-distribution scenario. The reason is that the routing parameters are updated simultaneously with the expert parameters using the training set, and thus cannot effectively adapt to the target distribution. When a separate validation set is not available, \texttt{CoMiGS} can alternatively sample a new training batch to update the routers and still offer competitive results for in-distribution tasks (see Table~\ref{tab:1g1l-extended}). 

\vspace{-1em}
\paragraph{Token-level Collaborative Decisions Outperform Client-Level.} Compared to the state-of-the-art baseline \texttt{PCL} and \texttt{FDLoRA}, our method demonstrates a clear performance improvement. While both methods require a separate validation set as in our method to determine collaboration weights, \texttt{PCL} determines the weights to combine each client's models iteratively while \texttt{FDLoRA} determines the weights for the global and local model at the end of training. 
Our method, in contrast, decides the collaboration pattern based on each input token, 
allowing the router weights to co-adapt with the expert parameters throughout training. This enables a more flexible and fine-grained collaboration.

\vspace{-1em}
\paragraph{The Necessity of the Co-existence of Generalists and Specialists.} The performances of \texttt{CoMiGS-2G} and \texttt{CoMiGS-2S} are not consistent across the different scenarios, while our \texttt{CoMiGS-1G1S} can always closely track the best-performing model, which is clearly visualized in Figure~\ref{fig:test_ppl}. 
Even for in-distribution tasks, it is unclear whether \texttt{CoMiGS-2G} or \texttt{CoMiGS-2S} will outperform, suggesting both generalists and specialists are necessary as it is impossible to determine the language structure in advance. Even drastically different users still share many of the same tokens. A data-dependent combination of generalists and specialists is required.

\subsubsection{Routing Analysis}

\paragraph{Token-wise Analysis.} Figure~\ref{fig:token-wise routing results} visualizes token-level routing results for models fine-tuned on the SlimPajama dataset. In the \emph{first} layer, function words (e.g., "and," "a," "on," "the") are predominantly routed to generalists. In contrast, in the \emph{last} layer, content words are more frequently assigned to generalists. This pattern is particularly evident for the first two users, trained on math and programming texts, where domain-specific terms are primarily routed to specialists. These findings suggest that experts in later layers develop more distinct role specializations. Importantly, only the top choice is highlighted here. The abundance of blue does not imply that generalist experts play no role in predicting the next token. As compared to when only specialists are present (\texttt{CoMiGS-2S}), our \texttt{CoMiGS-1G1S} gives more consistent results. More detailed token-wise routing result visualization including out-of-distribution tasks can be seen in Appendix~\ref{app: expert_specialization}. When dealing with out-of-distribution texts, there is an increasing tendency to seek generalists, as shown in the off-diagonal entries in Figure~\ref{fig:colored-tokens-layer0}-\ref{fig:colored-tokens-layer11-agnews}.

\vspace{-1em}
\paragraph{Layer-wise Analysis.} Figure~\ref{fig:expert_scores-1g1l-main} depicts the evolution of averaged layer-wise router outputs for the generalist and specialist experts on the \emph{out-of-distribution} task, comparing \texttt{CoMiGS-1G1S} and \texttt{pFedMoE}. As training progresses, \texttt{CoMiGS-1G1S} undergoes a \emph{phase transition}: the layer-wise routers initially favor generalists but gradually shift towards specialists. This shift is not observed in \texttt{pFedMoE}, highlighting the critical role of our routing mechanism in handling out-of-distribution tasks. Additionally, we notice different layers converge to a different expert score distribution. When applying our \texttt{CoMiGS-1G1S}, for each user, there are always certain layers where the routers consistently prefer generalists, which aligns with the fact that our target distribution is a union of all local training distributions. For \emph{in-distribution} tasks (Figure~\ref{fig:expert_scores-1g1l-in-distribution}), during early stage of training, some layers favor generalists. When close to convergence, all layers favor specialists. We attribute this to the fact that generalists are updated with more tokens and are thus knowledgable from an early stage, while it takes longer for specialists to refine their knowledge with small local training data.

\begin{figure*}[t!]
    \centering
    \begin{subfigure}[b]{\linewidth}
        \centering
\includegraphics[width=.8\linewidth]{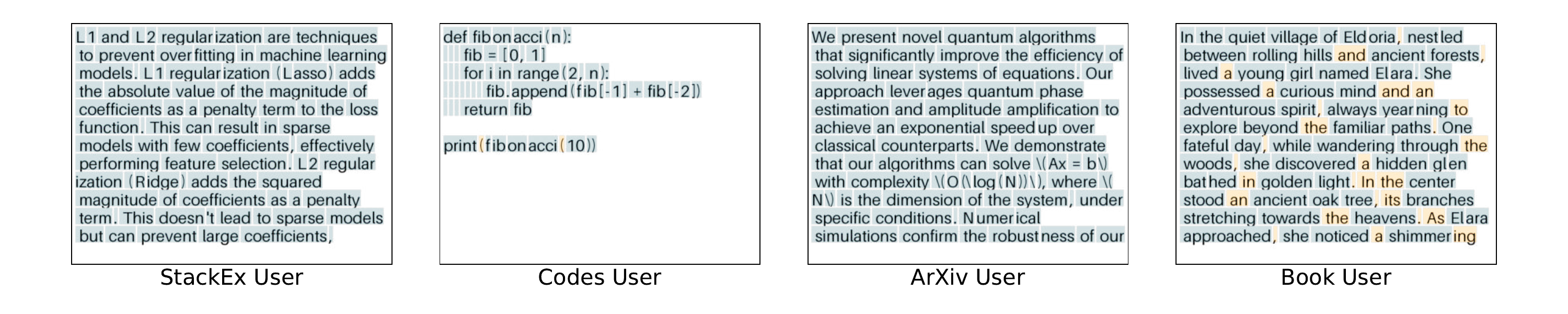}
    \end{subfigure}
    \vfill
    \begin{subfigure}[b]{\linewidth}
        \centering
        \includegraphics[width=.8\linewidth]{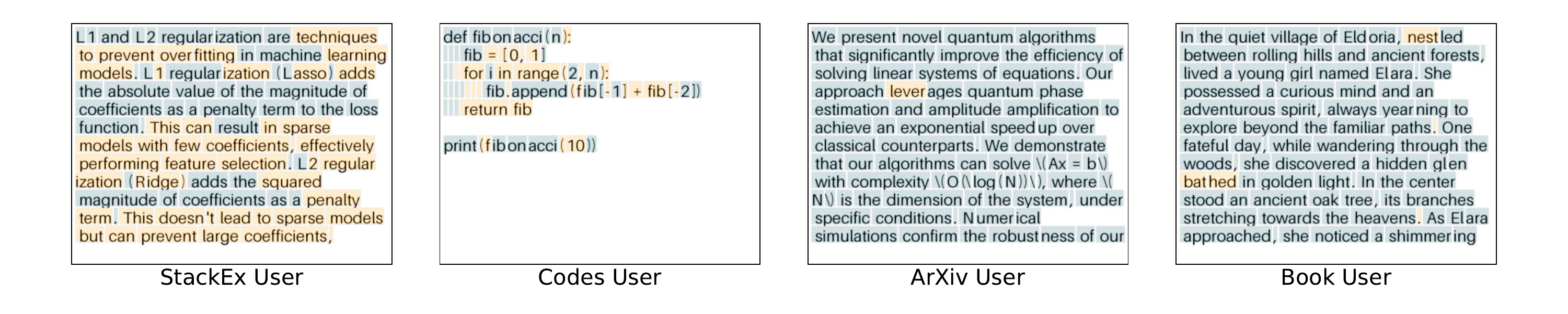}
    \end{subfigure}
    \caption{Visualization of in-distribution token-level routing results for \texttt{CoMiGS-1G1S} trained on SlimPajama. Tokens are colored with the Top1 expert choice at the first layer (top) and last layer (bottom). \textcolor{generalist}{Orange} denotes the generalist and \textcolor{specialist}{blue} denotes the specialist. Texts are generated by ChatGPT. Further colored text plots are provided in Appendix~\ref{app: expert_specialization}.}
    \label{fig:token-wise routing results}
    \vspace{-1em}
\end{figure*}

\begin{figure}[t!]
    \centering \includegraphics[width=\linewidth]{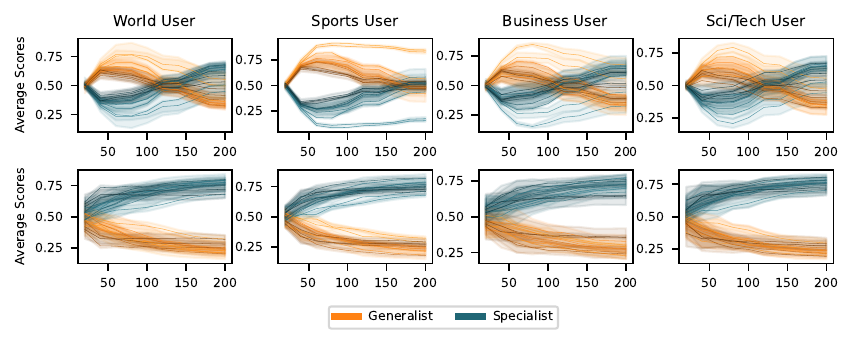}
    \caption{Expert Scores for the \emph{generalist} expert and the \emph{specialist} expert, averaged across all tokens and multiple batches for the out-of-distribution task (AG News). X-axis: number of iterations. Top: \texttt{CoMiGS-1G1S}, Bottom: \texttt{pFedMoE}. Darker colors indicate deeper layers.} %
    \label{fig:expert_scores-1g1l-main}
    \vspace{-2em}
\end{figure}

\subsection{Adaptation to Computational Resource Heterogeneity}
\label{sec: adaptation_to_resource_constraints}

\subsubsection{Baseline Comparison}
In this section, our focus is to deal with computational resource heterogeneity, where users can have different numbers of experts $n_i$. We denote different experimental setup by specifying the list of $n_i$s. We still keep one generalist expert per device, but the number of specialists can vary across the users (the variation is called One-Generalist-X-Specialists, in short, \texttt{CoMiGS-1GXS}).  It's important to note that the richness of computational resources doesn't always correlate with the complexity of local data. For instance, some users may have ample computational resources but local data in small quantities. In such cases, a crucial objective is to prevent overfitting due to redundant model-fitting abilities.

We compare our approach to two state-of-the-art baselines: \texttt{HetLoRA} from \citet{cho2023heterogeneous} and \texttt{FlexLoRA} from \citet{bai2024federated}, both of which adapt LoRA ranks based on the resource capacity of each user. \texttt{HetLoRA} aggregates LoRA matrices $\mA$ and $\mB$ by zero-padding to the maximum rank and then distributes them back using rank truncation. In contrast, \texttt{FlexLoRA} first reconstructs model updates $\Delta \mW$ and redistributes the aggregated updates using SVD. We compare our method to these baselines by matching the number of tunable parameters, measured as both active and full parameters. For example, to match the full parameter count of \texttt{CoMiGS-1GXS} with $(4,2,2,2)$ LoRA experts (rank 8), LoRA modules of ranks $(32, 16,16,16)$ would be required. With Top2 routing, to match the active parameter count, each user would need LoRA modules of rank 16.

Our results, presented in Table~\ref{tab:1gxl}, are based on allocating varying numbers of experts to users, with computational resource availability decoupled from local task complexity. Our method outperforms the baseline methods for all \emph{in-distribution} tasks, regardless of matching the full parameter count or the active parameter count. This advantage stems from the fact that both \texttt{HetLoRA} and \texttt{FlexLoRA} average model parameters across users without allocating parameters for local adaptations, focusing on building a strong generalist model. In contrast, our approach adaptively integrates both generalist and specialist knowledge, excelling in scenarios where specialized knowledge is crucial.

\begin{table}[h!]
\vspace{-2mm}

\caption{Mean test ppl (std) over users with heterogeneous models, averaged across 3 seeds. Light / dark grey denote in-distribution and out-of-distribution tasks respectively. }
\vspace{-1em}
\begin{center}
\resizebox{\linewidth}{!}{
\begin{sc}
\begin{tabular}{l c c c c c} 
 \toprule
  & \textbf{Ours} & \multicolumn{2}{c}{\textbf{HetLoRA}}  & \multicolumn{2}{c}{\textbf{FlexLoRA}} \\
\arrayrulecolor{black!30}
  & CoMiGS-1GXS & Active & Full & Active & Full \\
\arrayrulecolor{black}\midrule 
\emph{GPT2-124M}& & & & & \\
\rowcolor{gray!8}  \textbf{Multilingual} &  &  &&& \\
 \rowcolor{gray!8}    \emph{(2,2,4,4)} & \textbf{46.48 (0.16)} &  57.76 (0.10) &   58.60 (0.20) &  77.71 (0.15) &  77.66 (0.06)\\ %
\rowcolor{gray!8}   \emph{(4,4,2,2)} & \textbf{47.24 (0.09)} &  57.76 (0.10) &   59.14 (0.04) &  77.71 (0.15) &  75.64 (0.19) \\ %

\rowcolor{gray!8}  \textbf{SlimPajama} &  & &&&  \\
\rowcolor{gray!8}   \emph{(2,4,4,2)} & \textbf{22.10 (0.17)} & 23.33 (0.10) & 23.15 (0.09) & 22.98 (0.10) & 23.03 (0.07)) \\ 
\rowcolor{gray!8}   \emph{(4,2,2,4)} & \textbf{22.28 (0.09)} & 23.33 (0.10) & 23.17 (0.09) & 22.98 (0.10) & 23.03 (0.08) \\ 
 \rowcolor{gray!25}  \textbf{AG News} &  & & & & \\
\rowcolor{gray!25}  \emph{(4,2,2,2)} & 33.66 (0.07) & \textbf{31.58 (0.14)} & 31.95 (0.13) & 36.41 (0.18) & 36.62 (0.11)\\ 
 \rowcolor{gray!25}  \emph{(2,4,4,4)} & 34.22 (0.09) & \textbf{31.58 (0.14)} &  32.52 (0.19) & 36.41 (0.18) & 36.46 (0.04)\\ 
\arrayrulecolor{black!30}\arrayrulecolor{black}

\midrule 
\emph{Llama3.2-1B}& & & & & \\
 \rowcolor{gray!8} 
 \textbf{Common-Corpus} &  &  &&& \\
 \rowcolor{gray!8}    \emph{(2,4,4,2)} & \textbf{18.74 (0.14)} & 21.41 (0.12) & 21.74 (0.09) &  24.63 (0.12) & 25.18 (0.08) \\ 
\rowcolor{gray!8}   \emph{(4,2,2,4)} & \textbf{18.68 (0.11)} & 21.41 (0.12) &  21.61 (0.10) & 24.63 (0.12) & 24.74 (0.09) \\ 
  \rowcolor{gray!25} 
 \textbf{AG News} &  & & & & \\
\rowcolor{gray!25}  \emph{(4,2,2,2)} & 16.39 (0.11) & \textbf{15.89 (0.05)} & 16.02 (0.05) & 17.33 (0.04) & 17.52 (0.04) \\
 \rowcolor{gray!25}  \emph{(2,4,4,4)} & 16.44 (0.07) & \textbf{15.89 (0.05)} &  16.25 (0.11) & 17.33 (0.04) & 17.70 (0.10) \\ 
\arrayrulecolor{black!30}\arrayrulecolor{black}\bottomrule 
\end{tabular}
\end{sc}
}

\label{tab:1gxl}
\end{center}
\vspace{-1em}
\end{table}

\vspace{-2mm}
\subsection{User-specific Analysis}
\label{subsection: decoupling data quantity and resource}
In this section, we investigate how each user can benefit from our \texttt{CoMiGS-1GXS}. It is observed that our approach is more robust to overfitting due to the regularizing effect of the generalist, while at the same time better fitting local data through the incorporation of specialist knowledge.

We conduct experiments using the Multilingual Wikipedia dataset, where there are enough tokens to allocate different data quantities to users.  In practice, users may not know their local data complexity, leading to a potential mismatch in resource allocation relative to data quantity. To simulate such scenarios, we allocate model capabilities—measured by $n_i$ (the number of LoRA modules per user)—either positively or negatively correlated with their local data size. It is important to note that one generalist is always assigned. Top2 routing is always performed is $n_i \geq 2$.

\paragraph{More Specialists Help with Higher Data Quantity.}\label{sec:more-specilist-high-data-complexity}
High data quantity users (French and Italian) consistently benefit from having more specialists locally, as their test perplexities decrease when the number of specialists increases from 1 to 3 to 7. This suggests that when sufficient local training data is available, adding more specialists leads to improved performance.  
\begin{figure}[ht!]
    \centering
    \vspace{-2mm}
    \includegraphics[width=\linewidth]{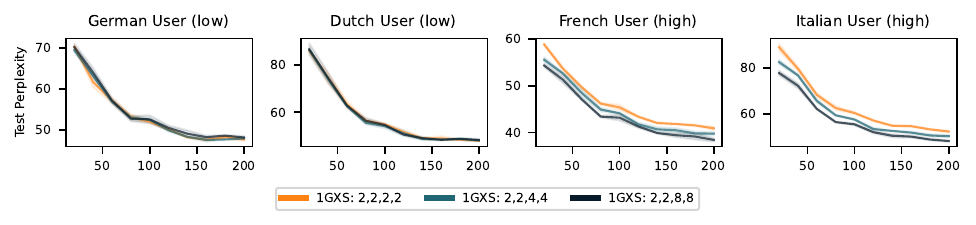}
    \vspace{-2em}
    \caption{Test Perplexity vs. the number of iterations.  Low and high denote data quantity. Legend denotes $n_i$.}
    \label{fig:multi-lingual-more-resource-high-data-complexity}
    \vspace{-1em}
\end{figure}

\vspace{-2mm}
\paragraph{Generalists Help to Prevent Redundant Specialists from Over-Fitting.}
For users with low data quantities, local model training with just two LoRA modules already results in overfitting (a trend observed in Figure~\ref{fig:test_ppl}). Our method succeeds to suppress overfitting, even when fine-tuning twice or four times as many expert parameters. We attribute this to the existence of the generalists.

\begin{figure}[ht!]
\vspace{-2mm}
    \centering
\includegraphics[width=\linewidth]{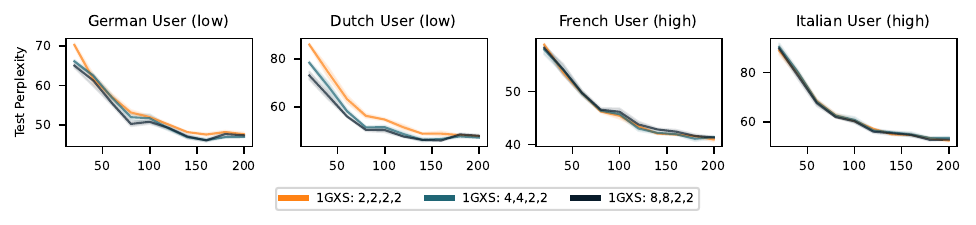}
    \vspace{-2em}
    \caption{Test Perplexity vs. the number of iterations. Low and high denote data quantity. Legend denotes $n_i$.}
    \label{fig:multi-lingual-more-resource-low-data-complexity}
    \vspace{-1em}
\end{figure}
\vspace{-1mm}
\paragraph{Specialists Can Benefit Generalists.}
What happens if users can only support a maximum of one expert? In our setup, such users must rely on the generalist expert when participating in collaboration. Interestingly, even when their collaborators are allocated more specialists, low-resourced users with only one generalist still benefit from the refined role diversification between generalists and specialists. As a result, the generalists become more powerful, as demonstrated in Figure~\ref{fig:more-resource-helps-everyone}.

\begin{figure}[ht!]
    \centering
    \vspace{-2mm}
\includegraphics[width=\linewidth]{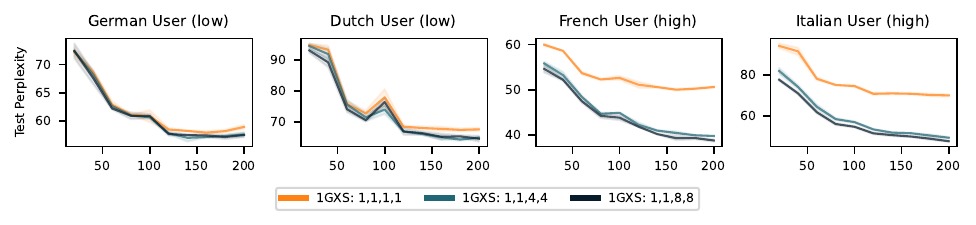}
    \vspace{-2em}
    \caption{Test Perplexity vs. the number of iterations. Low and high denote data quantity. Legend denotes $n_i$. }
    \label{fig:more-resource-helps-everyone}
    \vspace{-2mm}
\end{figure}

We provide an additional example of the impact of local data quantities in Appendix~\ref{app: additional-data-quantity-experiments} using SlimPajama dataset. Similar conclusions can be drawn from our empirical results. However, there is a limit to how much generalists can help prevent overfitting when the local tasks are easy. 

\subsection{Computational and Communication Overhead} \label{sec: overhead}

Our approach offers a significant advantage for on-device deployment due to its minimal computational and communication overhead. We compare the resource consumption of our \texttt{CoMiGS-1G1S} to \texttt{FedAvg} in Table~\ref{tab: resource consumption}, matching the parameter count for LoRA modules. 

The communication costs are halved compared to standard \texttt{FedAvg}, as only the weights of generalist experts are exchanged. Our framework employs a first-order algorithm, ensuring that computation and memory requirements remain on par with those of standard \texttt{FedAvg} algorithms. The additional memory and computational overhead primarily stem from the inclusion of the router, which is minimal ($1.25\%$ increase) since the router consists of a single-layer MLP.

\begin{table}[h!]
\caption{Extra resource consumption (per device) \texttt{CoMiGS-1G1S} compared to standard \texttt{FedAvg}, assuming base model is GPT-124M with bfloat16 training.}
\vspace{-2mm}
\small
    \centering
\resizebox{.8\linewidth}{!}{
    \begin{sc}
        \begin{tabular}{ l l l}
    \toprule 
     \textbf{Comp. Overhead} & \textbf{Memory} & \textbf{Comm. Costs} \\
     \ / forward pass &  & \ / round \\
     \midrule
     + 5 mflops  & + 0.035 MB  & -1.41 MB  \\
     \small{(+1.25\%)} & \small{(+1.25\%)} & \small{(-50\%)} \\
     \bottomrule
    \end{tabular}
    \end{sc}}
    
    \label{tab: resource consumption}
    \vspace{-1em}
\end{table}

\section{Conclusions}
We propose a novel framework for on-device personalized collaborative fine-tuning of LLMs, grounded in an innovative bi-level formulation of the Mixture-of-Experts learning objective. Our fine-grained integration of generalist and specialist expert knowledge achieves superior performance in balancing personalization and collaboration within Federated LLMs.

Furthermore, our framework is the first to address both model and data heterogeneity in collaborative LLM training. It further decouples local data quantity from resource availability, allowing high-resourced users to leverage larger datasets for improved performance while remaining resilient against overfitting in low-data scenarios. 
\texttt{CoMiGS} is both theoretically sound and resource-efficient for practical deployment.

\section*{Impact Statement}

We offer a collaboration framework for edge devices, aiming to enable smaller devices to leverage large language models (LLMs) despite limited resources and data availability. Our approach enhances fairness and mitigates privacy concerns by ensuring data remains on end devices. The privacy aspects can further be enhanced by differential private aggregation of generalist weights, which we do not pursue here. 

The robustness towards attackers is beyond the scope of our work. Our collaboration framework has no guarantee of resilience towards adversarial attackers through the aggregation of the generalist weights, which could potentially lead to misuse by certain parties. Further research is required on top of our framework to guarantee its safe deployment.

\section*{Acknowledgements}

This work was supported by the Swiss State Secretariat for
Education, Research and Innovation (SERI) under contract
number 22.00133 and received funding from the European Union’s Horizon 2020 research and innovation programme under grant agreement No 101017915 (DIGIPREDICT).

\bibliography{example_paper}
\bibliographystyle{icml2025}

\newpage
\appendix
\onecolumn
\section{Our Algorithm}
\label{appendix - alg}

The pseudo codes of our proposed \texttt{CoMiGS} method are presented in Alg.~\ref{alg}. While the scheme requires a server, it can alternatively be implemented in a serverless all2all fashion, which requires $N$ times more communication overhead and we do not further pursue this here.

\begin{algorithm}[h!]
\caption{Pseudo code of our proposed algorithm}\label{alg}

\begin{algorithmic}
\STATE \textbf{Input:} Expert parameters $\{\thetav_{i,0}^G, \thetav_{i,0}^S\}$, routing parameters \{$\phiv_{i,0}$\}. Local training data and validation data $\{\mX_i^{\text{train}}, \mX_i^{\text{valid}}\}$, $i \in \{1,2,..,N\}$. Communication round $T$ and routing update period $\tau$. Load balancing weight $\lambda$.

\FOR {$t = {1,...,T}$} 
\STATE Server aggregates generalist parameters: $\thetav_{t-1}^G = \frac{1}{N}\sum_i \thetav_{i,t-1}^G$
\FOR {$i \in [0,N)$} 
\STATE Users download aggregated generalist weights and
\STATE prepare model parameters for training $\{\thetav_{t-1}^G, \thetav_{i,t-1}^S, \phiv_{i,t-1}\}$
\STATE Do gradient steps on $(\thetav_{t-1}^G, \thetav_{i,t-1}^S)$ towards minimizing (\ref{alter-update: expert}) and get $(\thetav_{i,t}^G, \thetav_{i,t}^S)$ 
\begin{equation}
\label{alter-update: expert}
\begin{aligned}
    & \min_{\textcolor{cbgreen}{\thetav_i^G, \thetav_i^S}} \gL (f(\mX_i^{\text{train}}; \textcolor{cbgreen}{\thetav_i^G, \thetav_i^S}, \phiv_{i,t-1}), \mX_i^{\text{train}}) + \\
    & \lambda \cdot \gL_i^{\text{LB}}(\mX_i^{\text{train}}; \textcolor{cbgreen}{\thetav_i^G, \thetav_i^S}, \phiv_{i,t-1})
    \end{aligned}
\end{equation}
\IF{ $t \% \tau = 0$}
    \STATE Do gradient steps on $\phiv_{i,t-1}$ towards minimizing (\ref{alter-update: gate}) and get $\phiv_{i,t}$
    \begin{equation}
    \label{alter-update: gate}
    \begin{aligned}
    & \min_{\textcolor{cbgreen}{\phiv_i}} \gL (f(\mX_i^{\text{valid}}; \thetav^G_{i,t}, \thetav^S_{i,t}, \textcolor{cbgreen}{\phiv_i}), \mX_i^{\text{valid}})+ \\
    & \lambda \cdot \gL_i^{\text{LB}}(\mX_i^{\text{valid}}; \thetav_{i,t}^G, \thetav_{i,t}^S, \textcolor{cbgreen}{\phiv_{i}})
    \end{aligned}
\end{equation}
\ENDIF
\ENDFOR
\STATE Each device $i \in \{1,2,..,N\}$ sends generalist weights $\thetav_{i,t}^G$ to the server

\ENDFOR
\STATE \textbf{Return:} Expert parameters $\{\thetav_{i,T}^G, \thetav_{i,T}^S\}$  and routing parameters $\{\phiv_{i,T}\}$
\end{algorithmic}
\end{algorithm}

\section{Extra Experimental Details}
\label{app:sec:experimental_details}

\subsection{Training Details}

Following~\citet{kalajdzievski2023rank}, we choose $\gamma$ to be a rank-stabilized value, a technique which helps stabilize gradient norms. $\alpha$ and the rank $r$ are hyper-parameters to choose from. The LoRA modules function as follows:
\begin{equation}
    \mW = \mW^0 +\gamma \cdot \mA\mB, \qquad \gamma = \frac{\alpha}{\sqrt{r}}
\end{equation}

All our experiments except the centralized ones were conducted on a single A100-SXM4-40GB GPU. The centralized learning baseline experiments were conducted on a single A100-SXM4-80GB GPU, as a batch size of 64*4 requires a larger storage capacity. 

We use a constant learning rate of $2 \times 10^{-3}$ for updating routing parameters and a $2 \times 10^{-3}$ learning rate with a one-cycle cosine schedule for expert parameters during fine-tuning. The LoRA rank $r$ is set to 8 unless otherwise specified, with LoRA alpha $\alpha$ set to 16, following the common practice of setting alpha to twice the rank~\citep{lora-tips}. A load balancing weight $0.01$ is always applied.

\paragraph{GPT2 Experiments.} For AG News and Multilingual Wikipedia data splits, we conduct 20 communication rounds. For SlimPajama data splits, due to greater category diversity, we conduct 50 communication rounds. Between each pair of communication rounds, there are 10 local iterations. In each iteration, a batch size of 64 is processed with a context length of 128. We set the routing update period to 30 iterations, and every time we update routing parameters, we do 10 gradient steps on the validation loss. The choice of the hyperparamters is from a sweep run and we provide the evidence in Figure~\ref{fig:sweep-slimpajama}.

\paragraph{Llama3.2 Experiments.} For AG News data splits, we conduct 10 communication rounds. For Common-corpus data splits, due to greater category diversity, we conduct 20 communication rounds. Between each pair of communication rounds, there are 10 local iterations. In each iteration, a batch size of 64 is processed with a context length of 128. We set the routing update period to 30 iterations, and every time we update routing parameters, we do 10 gradient steps on the validation loss. 

\begin{figure}[h!]
    \centering
    \includegraphics[width=\textwidth]{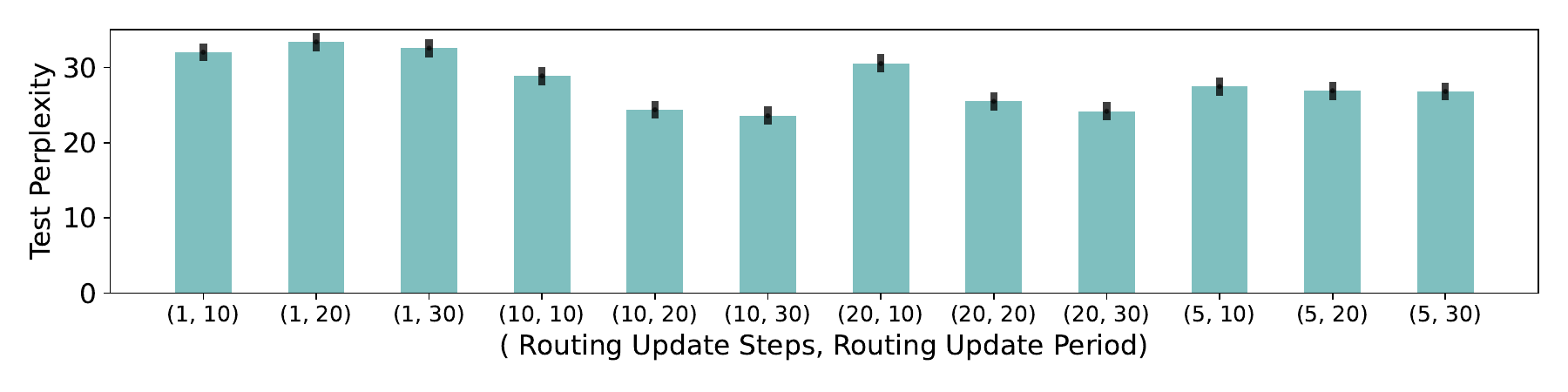}
    \caption{Sweep results on SlimPajama data splits using GPT2-124M base model. We ablate the impact of the update period ($\tau$) and the number of update steps ($s$) on model performance.}
    \label{fig:sweep-slimpajama}
\end{figure}

\subsection{Datasets}

The number of tokens for our experiments within each user is shown in Table~\ref{tab: num_of_tokens}.

Given the extensive pre-training of Llama 3.2 models on over 15 trillion tokens from public sources, and the multilingual capabilities of Llama 3.2 - 1B, fine-tuning on multilingual Wikipedia or SlimPajama resulted in negligible improvements likely due to significant overlap with the pre-training data corpus. We curated another more difficult fine-tuning dataset -- Common Corpus to show case the distinctions of the baseline methods.

\begin{table}[h!]
    \caption{Number of tokens in each dataset splits}
    \centering
    \begin{sc}
    \begin{tabular}{llllll}
    \toprule
        & & \bf User 1 & \bf User 2 & \bf User 3 & \bf User 4 \\
       
        \arrayrulecolor{black!30}\midrule 
        \multirow{3}{*}{\bf Multilingual}& Training  & 557'662 & 407'498 & 556'796 & 451'584\\
        & Validation & 300'764 & 216'318 & 220'071 &  165'984\\
        & Test & 229'720 & 219'741 & 210'570 & 172'547\\
        \midrule 
        \multirow{3}{*}{\bf SlimPajama}& Training  & 1'000'000 & 1'000'000 & 1'000'000 & 1'000'000\\
        & Validation & 200'000 & 200'000 & 200'000 &  200'000\\
        & Test & 200'000 & 200'000 & 200'000 & 200'000\\
        \arrayrulecolor{black!30}\midrule
        \multirow{3}{*}{\bf AG News}& Training  &  761'924 & 756'719 & 814'131 & 771'460 \\
        & Validation & 48'809 &  48'730 & 50'398 & 48'249\\
        & Test & 48'167 &  47'721 & 48'344 & 49'377 \\
         \midrule 
        \multirow{3}{*}{\bf Common Corpus}& Training  & 1'000'000 & 1'000'000 & 1'000'000 & 1'000'000\\
        & Validation & 200'000 & 200'000 & 200'000 &  200'000\\
        & Test & 200'000 & 200'000 & 200'000 & 200'000 \\\arrayrulecolor{black}\bottomrule  
        
    \end{tabular}
\end{sc}
    \label{tab: num_of_tokens}
\end{table}

\section{More Tables and Figures}

\subsection{Learning Curves of Different Methods}

See Figure~\ref{fig:test_ppl}.

\begin{figure}[t!]
    \centering \includegraphics[width=.9\textwidth]{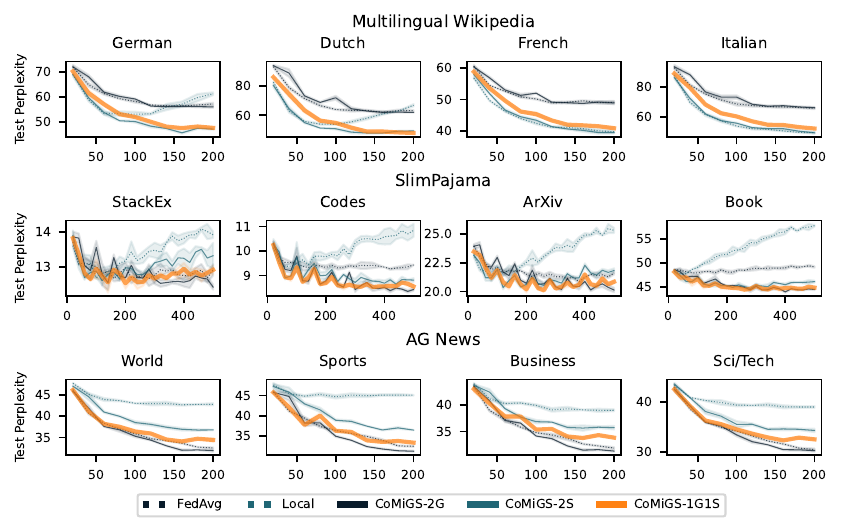}
    \caption{Test Perplexity during training (base model: GPT2-124M): \textcolor{generalist}{our method} closely follows the best performing method }
    \label{fig:test_ppl}
    \vspace{-1em}
\end{figure}

\begin{figure}[t!]
    \centering \includegraphics[width=\textwidth]{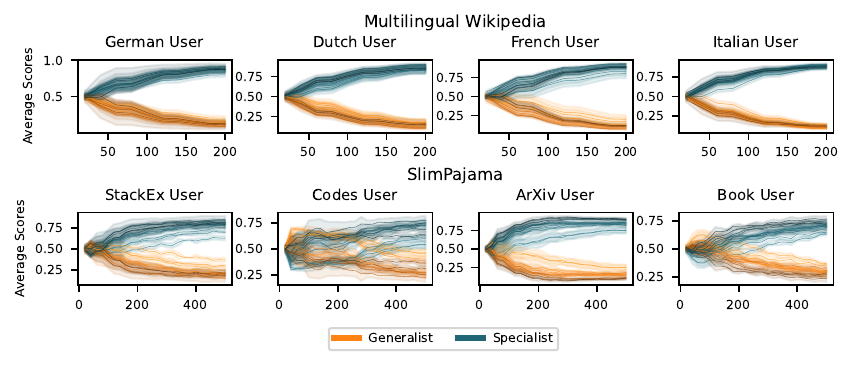}
    \caption{Expert Scores for the \emph{generalist} expert and the \emph{specialist} expert from our \texttt{CoMiGS-1G1S} method, averaged across all tokens and multiple batches for the in-distribution task, with x-axis being the number of iterations. Darker colors represent deeper layers. }
    \label{fig:expert_scores-1g1l-in-distribution}
\end{figure}

\subsection{Extended Baseline Comparison}
\label{app: extended-baseline-comparison}
An extended version of Table~\ref{tab:1g1l} is presented in Table~\ref{tab:1g1l-extended}. In this extension, we incorporate two additional ablations: 1) Integration of a routing mechanism, updated simultaneously with the expert networks; 2) Iterative updates alternating between routing and expert parameters, with the routing parameters updated using newly-sampled training batches instead of a dedicated validation set. 2) is to address the scenario where a validation set is not available. 

Moreover, we include two other baseline methods -- \texttt{FFA-LoRA} from \citet{sun2024improving} and \texttt{FedSA} from \citet{guo2024selectiveaggregationlowrankadaptation}. \texttt{FFA-LoRA} keeps the LoRA A matrices fixed at initialization, while \texttt{FedSA} always aggregates LoRA A matrices but leave LoRA B matrices localized.

Notably, the comparison between scenarios ii) and iii) reveals minimal disparity, underscoring the significance of having an independent validation set exclusively for routing parameter updates.

\begin{table}[h!]
\caption{Mean test perplexity over users with homogenous models, averaged across 3 seeds. Mean (std) with a rank locator for the mean (the lower the better). \textcolor{cbgreen}{Green} denotes the best performing methods and \textcolor{cbred}{red} denotes our method.}
\small
\begin{center}
\begin{sc}
\begin{tabular}{l a a b} 
 \toprule
 & \multicolumn{2}{c}{\cellcolor{gray!10} \textbf{In Distribution}} & \textbf{Out of Distribution} \\
& \emph{Multilingual} & \emph{SlimPajama}  & \emph{AG News} \\
 \midrule
  \multicolumn{4}{l}{\textcolor{gray}{i) Without routing}}\\
  \arrayrulecolor{black!30}\midrule
 \emph{Pretrained}   
&156.12 & 37.19 &90.65\\ %
  \emph{Centralized} 
& 55.41 (0.12) \tikz{
\draw[gray,line width=.3pt] (0,0) -- (1,0);
\draw[white, line width=0.01pt] (0,-3pt) -- (0,3pt);
\draw[black,line width=2pt] (0.6945510361,-3pt) -- (0.6945510361,3pt);}
& 19.53 (0.14) \tikz{
\draw[gray,line width=.3pt] (0,0) -- (1,0);
\draw[white, line width=0.01pt] (0,-3pt) -- (0,3pt);
\draw[black,line width=2pt] (0,-3pt) -- (0,3pt);}
&  28.19 (0.52) 
  \tikz{
\draw[gray,line width=.3pt] (0,0) -- (1,0);
\draw[white, line width=0.01pt] (0,-3pt) -- (0,3pt);
\draw[black,line width=2pt] (0,-3pt) -- (0,3pt);}\\

\emph{Local} 
 
& 54.38 (0.32) %
\tikz{
\draw[gray,line width=.3pt] (0,0) -- (1,0);
\draw[white, line width=0.01pt] (0,-3pt) -- (0,3pt);
\draw[black,line width=2pt] (0.6155026861,-3pt) -- (0.6155026861,3pt);}
& 26.95 (0.14) %
\tikz{
\draw[gray,line width=.3pt] (0,0) -- (1,0);
\draw[white, line width=0.01pt] (0,-3pt) -- (0,3pt);
\draw[black,line width=2pt] (0.8993939394,-3pt) -- (0.8993939394,3pt);}
& 41.46 (0.06) %
 \tikz{
\draw[gray,line width=.3pt] (0,0) -- (1,0);
\draw[white, line width=0.01pt] (0,-3pt) -- (0,3pt);
\draw[black,line width=2pt] (0.9977443609,-3pt) -- (0.9977443609,3pt);}\\

 \emph{FedAvg} 
& 58.80 (0.34) %
\tikz{
\draw[gray,line width=.3pt] (0,0) -- (1,0);
\draw[white, line width=0.01pt] (0,-3pt) -- (0,3pt);
\draw[black,line width=2pt] (0.9547198772,-3pt) -- (0.9547198772,3pt);}
& 23.27 (0.05) %
\tikz{
\draw[gray,line width=.3pt] (0,0) -- (1,0);
\draw[white, line width=0.01pt] (0,-3pt) -- (0,3pt);
\draw[black,line width=2pt] (0.4533333333,-3pt) -- (0.4533333333,3pt);}
& 31.84 (0.02) %
 \tikz{
\draw[gray,line width=.3pt] (0,0) -- (1,0);
\draw[white, line width=0.01pt] (0,-3pt) -- (0,3pt);
\draw[black,line width=2pt] (0.2744360902,-3pt) -- (0.2744360902,3pt);} \\

\emph{FFA-LoRA} 
 
& 66.80 (0.20) %
\tikz{
\draw[gray,line width=.3pt] (0,0) -- (1,0);
\draw[white, line width=0.01pt] (0,-3pt) -- (0,3pt);
\draw[black,line width=2pt] (0.8802762855,-3pt) -- (0.8802762855,3pt);}
& 22.85 (0.12) %
\tikz{
\draw[gray,line width=.3pt] (0,0) -- (1,0);
\draw[white, line width=0.01pt] (0,-3pt) -- (0,3pt);
\draw[black,line width=2pt] (0.4715151515,-3pt) -- (0.4715151515,3pt);}
& 33.13 (0.09) %
 \tikz{
\draw[gray,line width=.3pt] (0,0) -- (1,0);
\draw[white, line width=0.01pt] (0,-3pt) -- (0,3pt);
\draw[black,line width=2pt] (0.2563909774,-3pt) -- (0.2563909774,3pt);}
\\

\emph{FedSa-LoRA} 
 
& 57.60 (0.14)
\tikz{
\draw[gray,line width=.3pt] (0,0) -- (1,0);
\draw[white, line width=0.01pt] (0,-3pt) -- (0,3pt);
\draw[black,line width=2pt] (0.8802762855,-3pt) -- (0.8802762855,3pt);}
& 23.40 (0.13)
\tikz{
\draw[gray,line width=.3pt] (0,0) -- (1,0);
\draw[white, line width=0.01pt] (0,-3pt) -- (0,3pt);
\draw[black,line width=2pt] (0.4715151515,-3pt) -- (0.4715151515,3pt);}
& 31.57 (0.10)
 \tikz{
\draw[gray,line width=.3pt] (0,0) -- (1,0);
\draw[white, line width=0.01pt] (0,-3pt) -- (0,3pt);
\draw[black,line width=2pt] (0.2563909774,-3pt) -- (0.2563909774,3pt);}
\\

\emph{PCL} 

& 54.53 (0.19) %
\tikz{
\draw[gray,line width=.3pt] (0,0) -- (1,0);
\draw[white, line width=0.01pt] (0,-3pt) -- (0,3pt);
\draw[black,line width=2pt] (0.6270145817,-3pt) -- (0.6270145817,3pt);}
& 26.99 (0.19) %
\tikz{
\draw[gray,line width=.3pt] (0,0) -- (1,0);
\draw[white, line width=0.01pt] (0,-3pt) -- (0,3pt);
\draw[black,line width=2pt] (0.9042424242,-3pt) -- (0.9042424242,3pt);}
 & 32.25 (0.12) %
 \tikz{
\draw[gray,line width=.3pt] (0,0) -- (1,0);
\draw[white, line width=0.01pt] (0,-3pt) -- (0,3pt);
\draw[black,line width=2pt] (0.3052631579,-3pt) -- (0.3052631579,3pt);} \\

\arrayrulecolor{black!30}\midrule
  \multicolumn{4}{l}{\textcolor{gray}{ii) Update routing and expert params simultaneously on training loss}}\\
  \arrayrulecolor{black!30}\midrule
  
  \emph{Local-MoE } 
  
  & 55.27 (0.40) %
  \tikz{
\draw[gray,line width=.3pt] (0,0) -- (1,0);
\draw[white, line width=0.01pt] (0,-3pt) -- (0,3pt);
\draw[black,line width=2pt] (0.6838066002,-3pt) -- (0.6838066002,3pt);}
  & 27.16 (0.16) %
\tikz{
\draw[gray,line width=.3pt] (0,0) -- (1,0);
\draw[white, line width=0.01pt] (0,-3pt) -- (0,3pt);
\draw[black,line width=2pt] (0.9248484848,-3pt) -- (0.9248484848,3pt);}
& 41.49 (0.01) %
\tikz{
\draw[gray,line width=.3pt] (0,0) -- (1,0);
\draw[white, line width=0.01pt] (0,-3pt) -- (0,3pt);
\draw[black,line width=2pt] (1,-3pt) -- (1,3pt);}\\
\emph{FedAvg-MoE } 

& 56.77 (0.37) %
\tikz{
\draw[gray,line width=.3pt] (0,0) -- (1,0);
\draw[white, line width=0.01pt] (0,-3pt) -- (0,3pt);
\draw[black,line width=2pt] (0.7989255564,-3pt) -- (0.7989255564,3pt);}
& 23.32 (0.07)
\tikz{
\draw[gray,line width=.3pt] (0,0) -- (1,0);
\draw[white, line width=0.01pt] (0,-3pt) -- (0,3pt);
\draw[black,line width=2pt] (0.4593939394,-3pt) -- (0.4593939394,3pt);}
& 32.24 (0.08) %
\tikz{
\draw[gray,line width=.3pt] (0,0) -- (1,0);
\draw[white, line width=0.01pt] (0,-3pt) -- (0,3pt);
\draw[black,line width=2pt] (0.3045112782,-3pt) -- (0.3045112782,3pt);}\\
\emph{pFedMoE} 
 
& 52.27  (0.17) %
\tikz{
\draw[gray,line width=.3pt] (0,0) -- (1,0);
\draw[white, line width=0.01pt] (0,-3pt) -- (0,3pt);
\draw[black,line width=2pt] (0.4535686876,-3pt) -- (0.4535686876,3pt);}
& 22.91  (0.18) %
\tikz{
\draw[gray,line width=.3pt] (0,0) -- (1,0);
\draw[white, line width=0.01pt] (0,-3pt) -- (0,3pt);
\draw[black,line width=2pt] (0.4096969697,-3pt) -- (0.4096969697,3pt);}
& 38.72 (0.21) %
 \tikz{
\draw[gray,line width=.3pt] (0,0) -- (1,0);
\draw[white, line width=0.01pt] (0,-3pt) -- (0,3pt);
\draw[black,line width=2pt] (0.7909774436,-3pt) -- (0.7909774436,3pt);}  \\

\arrayrulecolor{black!30}\midrule
\multicolumn{4}{l}{\textcolor{gray}{iii) Alternating update routing params on newly sampled batches from training set}}\\
  \arrayrulecolor{black!30}\midrule
  \emph{Local-MoE - tr} 
  
& 53.78 (0.33) \tikz{
\draw[gray,line width=.3pt] (0,0) -- (1,0);
\draw[white, line width=0.01pt] (0,-3pt) -- (0,3pt);
\draw[black,line width=2pt] (0.5694551036,-3pt) -- (0.5694551036,3pt);}
& 27.78 (0.06) \tikz{
\draw[gray,line width=.3pt] (0,0) -- (1,0);
\draw[white, line width=0.01pt] (0,-3pt) -- (0,3pt);
\draw[black,line width=2pt] (1,-3pt) -- (1,3pt);}
& 41.46 (0.03) \tikz{
\draw[gray,line width=.3pt] (0,0) -- (1,0);
\draw[white, line width=0.01pt] (0,-3pt) -- (0,3pt);
\draw[black,line width=2pt] (0.9977443609,-3pt) -- (0.9977443609,3pt);}
\\
\emph{FedAvg-MoE - tr} 

& 59.39 (0.13) \tikz{
\draw[gray,line width=.3pt] (0,0) -- (1,0);
\draw[white, line width=0.01pt] (0,-3pt) -- (0,3pt);
\draw[black,line width=2pt] (1,-3pt) -- (1,3pt);}
& 23.00 (0.01) \tikz{
\draw[gray,line width=.3pt] (0,0) -- (1,0);
\draw[white, line width=0.01pt] (0,-3pt) -- (0,3pt);
\draw[black,line width=2pt] (0.4206060606,-3pt) -- (0.4206060606,3pt);}
& 31.70 (0.16) \tikz{
\draw[gray,line width=.3pt] (0,0) -- (1,0);
\draw[white, line width=0.01pt] (0,-3pt) -- (0,3pt);
\draw[black,line width=2pt] (0.2639097744,-3pt) -- (0.2639097744,3pt);}
\\
\emph{CoMiGS - tr} 
 
 & 50.86 (0.14) \tikz{
\draw[gray,line width=.3pt] (0,0) -- (1,0);
\draw[white, line width=0.01pt] (0,-3pt) -- (0,3pt);
\draw[black,line width=2pt] (0.3453568688,-3pt) -- (0.3453568688,3pt);}
& 25.45 (0.01)  \tikz{
\draw[gray,line width=.3pt] (0,0) -- (1,0);
\draw[white, line width=0.01pt] (0,-3pt) -- (0,3pt);
\draw[black,line width=2pt] (0.7175757576,-3pt) -- (0.7175757576,3pt);}
& 38.93 (0.08) %
 \tikz{
\draw[gray,line width=.3pt] (0,0) -- (1,0);
\draw[white, line width=0.01pt] (0,-3pt) -- (0,3pt);
\draw[black,line width=2pt] (0.807518797,-3pt) -- (0.807518797,3pt);}\textbf{}
\\ 

\arrayrulecolor{black!30}\midrule
  \multicolumn{4}{l}{\textcolor{gray}{iv) Alternating update routing params on a validation set}}\\
 \arrayrulecolor{black!30}\midrule 
 \emph{CoMiGS - 2S} 
& 46.36 (0.16) \tikz{
\draw[gray,line width=.3pt] (0,0) -- (1,0);
\draw[white, line width=0.01pt] (0,-3pt) -- (0,3pt);
\draw[cbgreen,line width=2pt] (0,-3pt) -- (0,3pt);}
& 22.51 (0.08) \tikz{
\draw[gray,line width=.3pt] (0,0) -- (1,0);
\draw[white, line width=0.01pt] (0,-3pt) -- (0,3pt);
\draw[black,line width=2pt] (0.3612121212,-3pt) -- (0.3612121212,3pt);} & 35.81 (0.13) %
 \tikz{
\draw[gray,line width=.3pt] (0,0) -- (1,0);
\draw[white, line width=0.01pt] (0,-3pt) -- (0,3pt);
\draw[black,line width=2pt] (0.5729323308,-3pt) -- (0.5729323308,3pt);} \\  
 
 \emph{CoMiGS - 2G} 
& 58.31 (0.17) \tikz{
\draw[gray,line width=.3pt] (0,0) -- (1,0);
\draw[white, line width=0.01pt] (0,-3pt) -- (0,3pt);
\draw[black,line width=2pt] (0.9171143515,-3pt) -- (0.9171143515,3pt);}
& 21.36 (0.01) \tikz{
\draw[gray,line width=.3pt] (0,0) -- (1,0);
\draw[white, line width=0.01pt] (0,-3pt) -- (0,3pt);
\draw[cbgreen,line width=2pt] (0.2218181818,-3pt) -- (0.2218181818,3pt);}
& 31.18 (0.05) %
 \tikz{
\draw[gray,line width=.3pt] (0,0) -- (1,0);
\draw[white, line width=0.01pt] (0,-3pt) -- (0,3pt);
\draw[cbgreen,line width=2pt] (0.2248120301,-3pt) -- (0.2248120301,3pt);}\\ 

 \emph{CoMiGS - 1G1S} 
 
& 47.19 (0.10) \tikz{
\draw[gray,line width=.3pt] (0,0) -- (1,0);
\draw[white, line width=0.01pt] (0,-3pt) -- (0,3pt);
\draw[cbred,line width=2pt] (0.06369915579,-3pt) -- (0.06369915579,3pt);}
& 21.79 (0.04) \tikz{
\draw[gray,line width=.3pt] (0,0) -- (1,0);
\draw[white, line width=0.01pt] (0,-3pt) -- (0,3pt);
\draw[cbred,line width=2pt] (0.2739393939,-3pt) -- (0.2739393939,3pt);} 
& 33.53 (0.03) %
 \tikz{
\draw[gray,line width=.3pt] (0,0) -- (1,0);
\draw[white, line width=0.01pt] (0,-3pt) -- (0,3pt);
\draw[cbred,line width=2pt] (0.4015037594,-3pt) -- (0.4015037594,3pt);}\\ 
 \arrayrulecolor{black}\bottomrule \\
\end{tabular}
\end{sc}
\label{tab:1g1l-extended}
\end{center}
\end{table}

\subsection{HetLoRA}
\label{app: hetlora-comparison}

Analogously to the baseline experiment comparison in FlexLoRA \citep{bai2024federated}, we use $\gamma = 0.99$ as pruning strength and sweep the regularization parameter in $\{ 5 \times 10^{-2}, 5 \times 10^{-3}, 5 \times 10^{-4} \}$.

\subsection{Is the Standard Load Balancing Loss Sufficient?}
\label{append: load balancing loss}
The standard load balancing loss encourages equal assignment of tokens to each expert. When the number of experts gets larger, there might not be enough tokens routed to the generalists, which might lead to a under-developed general knowledge. We will verify if this is indeed true.

To encourage enough tokens to be routed to the generalist expert such that more general knowledge can be developed, we modify our load-balancing loss by introducing \textcolor{cbgreen}{importance weighting}. As we separate the 0-th expert to be the generalist expert and conduct Top-2 routing, the modified load balancing loss is as follows:
\begin{equation}
    \gL_i^{\text{LB}} = \textcolor{cbgreen}{\frac{1}{(n_i-1)^2+1}} \cdot f_0 \cdot P_0 + \sum_{j=1}^{n_i-1} \textcolor{cbgreen}{\frac{n_i -1}{(n_i-1)^2+1}} \cdot f_j   \cdot P_j
\end{equation}
where
\begin{equation}
    f_j = \frac{1}{T} \sum_{x \in \gB} \mathbbm{1} \{j \in \text{Top2 indices of }p(x)\} \qquad P_j = \frac{1}{T} \sum_{x \in \gB} p_j(x)
\end{equation}
$j$ is the expert index and \scalebox{0.8}{$p(x)=[p_j(x)]_{j=1}^{n_i}$} is the logit output from the routing network for a specific token $x$. The idea is that one of the top 2 tokens should always be routed to the generalist expert, i.e. the 0-th expert. Thus, \scalebox{0.8}{$\frac{p_0}{1/2}$} should be equal to \scalebox{0.8}{$\frac{p_i}{1/2(n_i-1)}$} for $i \neq 0$. As the original load balancing loss encourages uniform distribution, this modification encourages the generalist expert to have a routing probability of 0.5 on expectation. Note that when $n_i =2$, this $\gL_i^{\text{LB}}$ is the same as the original load balancing loss as proposed in \citet{fedus2022switch}. 

We present the results in Table~\ref{tab:load-balancing}: in both scenarios, whether users have the same or different numbers of experts, including a load-balancing term leads to a slight improvement compared to omitting it. However, encouraging more tokens to be routed to the generalists does not make a significant difference.

\begin{table}[t!]
\caption{Test perplexity with different load balancing terms with (hetero) or without (homo) resource heterogeneity.}
\small
    \centering
    \begin{tabular}{l c c c}
     \toprule
       & \bf{No LB}  & \bf{LB (uniform)} &  \bf{LB (generalist-favored)}\\
       \midrule
       AG News (homo) & 33.69 (0.21) & 33.53 (0.03)  & 33.53 (0.03) \\
        AG News (hetero) & 34.31 (0.05) & 34.28 (0.11) & 34.22 (0.09) \\
         Multi-Wiki (homo) & 47.31 (0.15) & 47.19 (0.10) & 47.19 (0.10) \\
 Multi-Wiki (hetero) & 46.36 (0.16) & 46.15 (0.04) & 46.48 (0.16) \\
  SlimPajama (homo) & 21.77 (0.02) & 21.79 (0.04) & 21.79 (0.04) \\
 SlimPajama (hetero) & 22.15 (0.07) & 22.10 (0.11) & 22.10 (0.17) \\
 \bottomrule
    \end{tabular}
    
    \label{tab:load-balancing}
\end{table}

\section{Additional Experiments}
\label{app: additional-data-quantity-experiments}
We replicate the experiments in Section~\ref{sec: adaptation_to_resource_constraints} with the SlimPajama dataset, where we assign four times as many tokens to ArXiv User and Book User as to Stack Exchange User and Codes User.

\textbf{More Specialists Help with Higher Data Quantity.} From Figure~\ref{appx:fig:multi-lingual-more-resource-high-data-complexity-rp}, it is evident that ArXiv User and Book User, with abundant local data, benefit from having more local experts.
\begin{figure}[h!]
    \centering
\includegraphics[width=\textwidth]{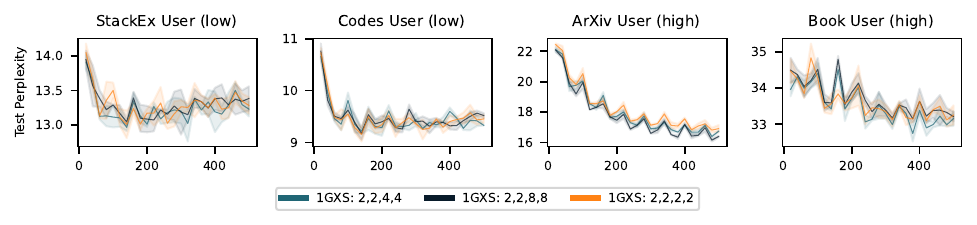}
    \vspace{-1em}
    \caption{
    Test Perplexity during training for the SlimPajama setup. ArXiv User and Book User have more local data and thus benefit from having more experts. The numbers in the legend indicate the number of experts $n_i$ within each user. Top-2 routing is performed.}
    \label{appx:fig:multi-lingual-more-resource-high-data-complexity-rp}
    \vspace{-1em}
\end{figure}

\textbf{Generalists Help to Prevent Redundant Specialists from Over-Fitting?} From Figure~\ref{appx:fig:multi-lingual-more-resource-low-data-complexity-rp}, we observe more prominent overfitting than in Figure~\ref{fig:multi-lingual-more-resource-low-data-complexity},  likely because the tasks are objectively easier, as indicated by lower test perplexity from the beginning of fine-tuning. Generalists have limited power to prevent overfitting with easy tasks.

\begin{figure}[h!]
    \centering
\includegraphics[width=\textwidth]{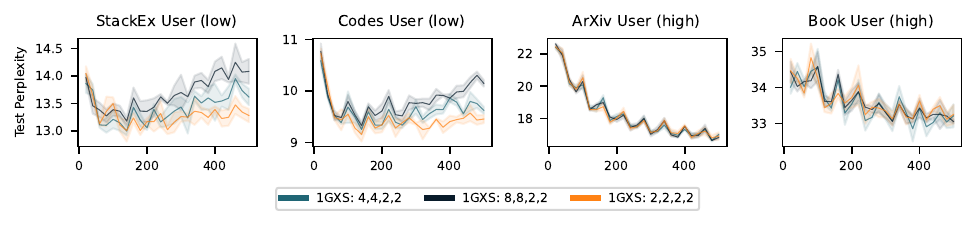}
    \vspace{-1em}
    \caption{In this SlimPajama setup, Stack Ex User and Codes User despite having low resources locally, overfit slightly on their small-sized local data. Numbers in the legend denote the number of experts $n_i$ within each user. Top2 routing is performed.}
    \label{appx:fig:multi-lingual-more-resource-low-data-complexity-rp}
\end{figure}

\textbf{Specialists Can Benefit Generalists.} 
Low-resourced users that can only support a single expert setup still benefit from collaboration, as the generalist knowledge is refined through a more detailed distinction between specialist and generalist roles via other high-resourced users. This is indicated by the enhanced performances for Stack Exchange and Codes Users. 

\begin{figure}[h!]
    \centering
\includegraphics[width=\textwidth]{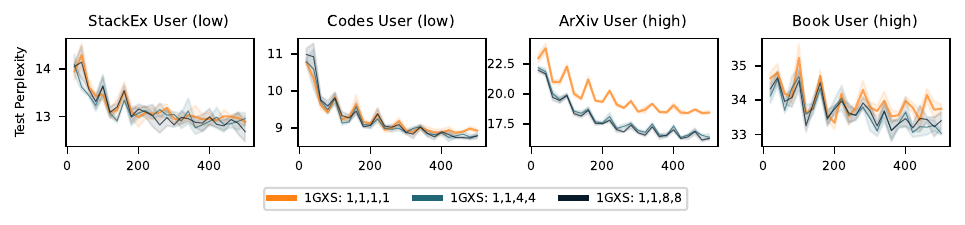}
    \vspace{-1em}
    \caption{In this SlimPajama setup, Stack Ex User and Codes User, despite having only one expert locally, still benefit from other users having more experts, thereby enhancing the generalist’s performance. The numbers in the legend indicate the number of experts, $n_i$, within each user. Top-2 routing is applied when
$n_i$ $\geq$ 2}
    \label{appx:fig:more-resource-helps-everyone}
    \vspace{-1em}
\end{figure}

\section{Visualization of Expert Specialization}
\label{app: expert_specialization}
To visualize which tokens are routed to the generalist and specialist experts for our \texttt{CoMiGS-1G1S} model trained on SlimPajama, we ask ChatGPT to generate texts in the style of StackExchange, Python Codes, ArXiv Paper and Books. We then feed those texts to the user-specific models and color the token with the Top1 routed index. The routing results after the very first layer (0th), a middle layer (5th), and the very last layer (11th) are presented in Figure~\ref{fig:colored-tokens-layer0}, \ref{fig:colored-tokens-layer6} and \ref{fig:colored-tokens-layer11}. 

We perform the same experiments on AG News, asking ChatGPT to generate News text on the topics World, Sports, Business, and Sci/Tech. The routing results after the very first layer (0th), a middle layer (5th), and the very last layer (11th) are presented in Figure~\ref{fig:colored-tokens-layer0-agnews}, \ref{fig:colored-tokens-layer05-agnews} and \ref{fig:colored-tokens-layer11-agnews}. 

For all the plots, diagonal entries are \emph{in-distribution} texts and off-diagonal entries are \emph{out-of-distribution} texts.

\begin{figure}
    \centering
    \includegraphics[width=\textwidth]{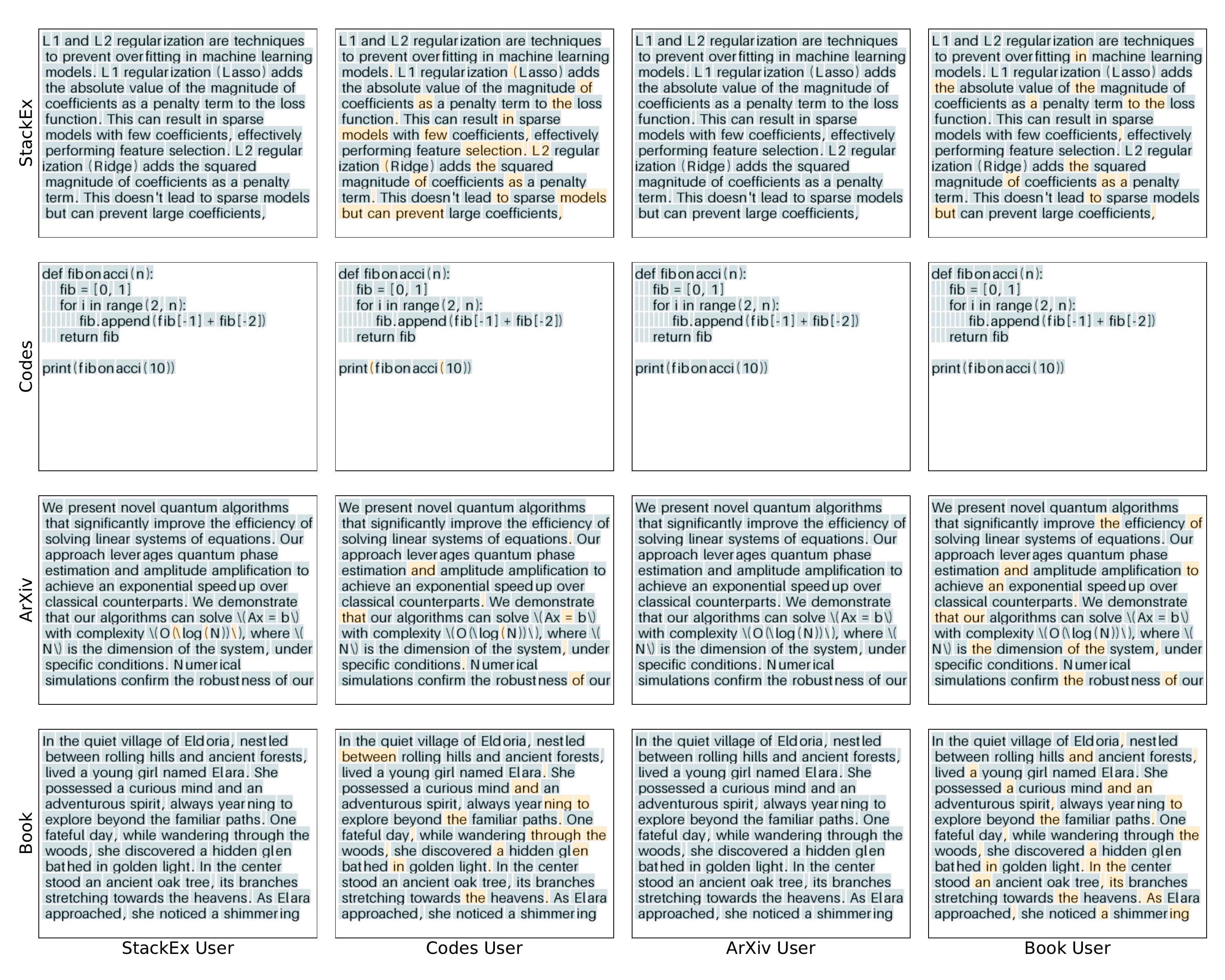}
    \caption{Visualization of token-level routing results for \texttt{CoMiGS-1G1S} trained on SlimPajama. Tokens are colored with the first expert choice at the 0th (first) layer. \textcolor{generalist}{Orange} denotes the generalist and \textcolor{specialist}{blue} denotes the specialist. Diagonal entries are in-distribution texts and off-diagonal entries are out-of-distribution texts. Texts are generated by ChatGPT.}
    \label{fig:colored-tokens-layer0}
\end{figure}

\begin{figure}
    \centering
\includegraphics[width=\textwidth]{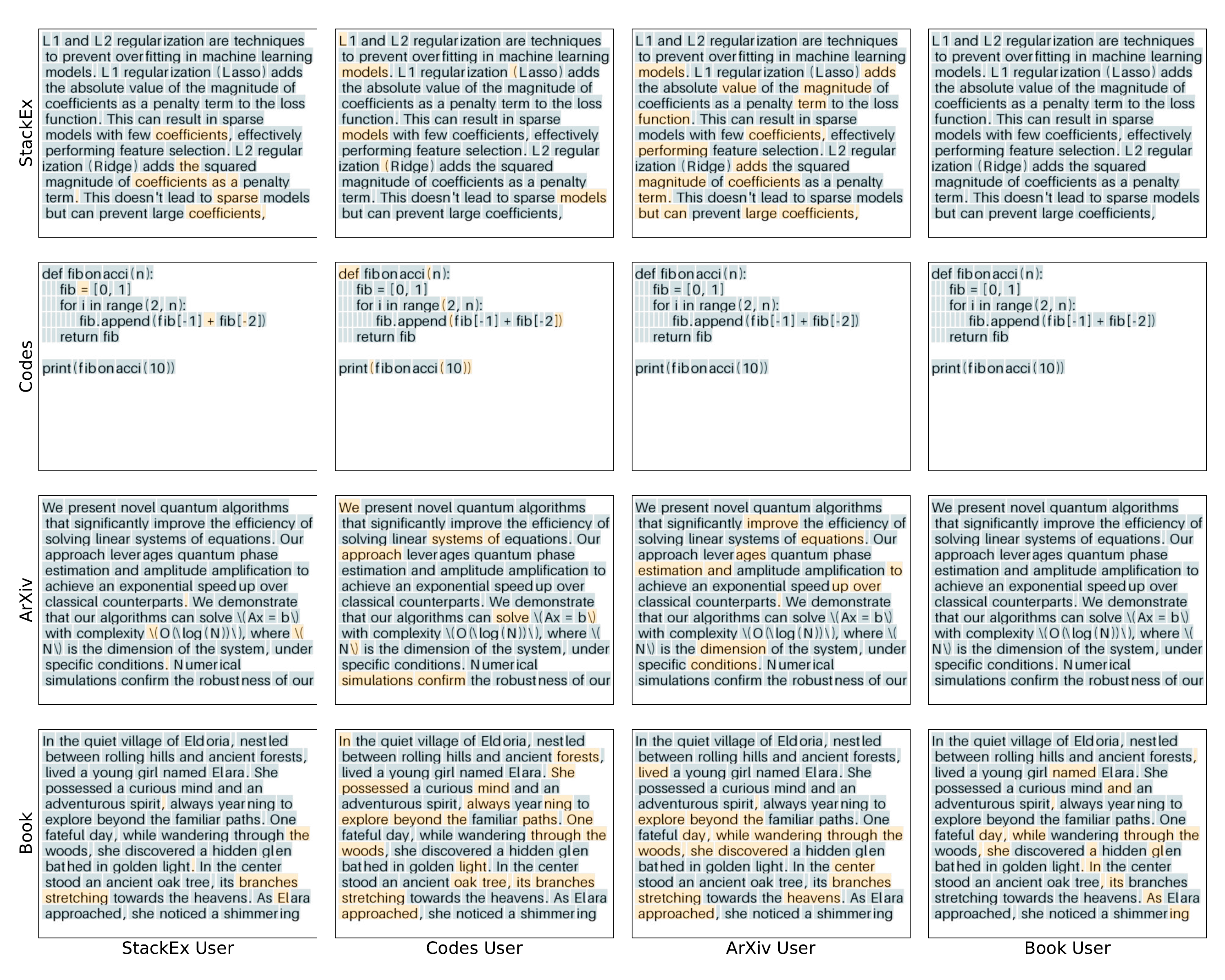}
    \caption{Visualization of token-level routing results for \texttt{CoMiGS-1G1S} trained on SlimPajama. Tokens are colored with the first expert choice at the 5th layer. \textcolor{generalist}{Orange} denotes the generalist and \textcolor{specialist}{blue} denotes the specialist. Diagonal entries are in-distribution texts and off-diagonal entries are out-of-distribution texts. Texts are generated by ChatGPT.}
    \label{fig:colored-tokens-layer6}
\end{figure}

\begin{figure}
    \centering
\includegraphics[width=\textwidth]{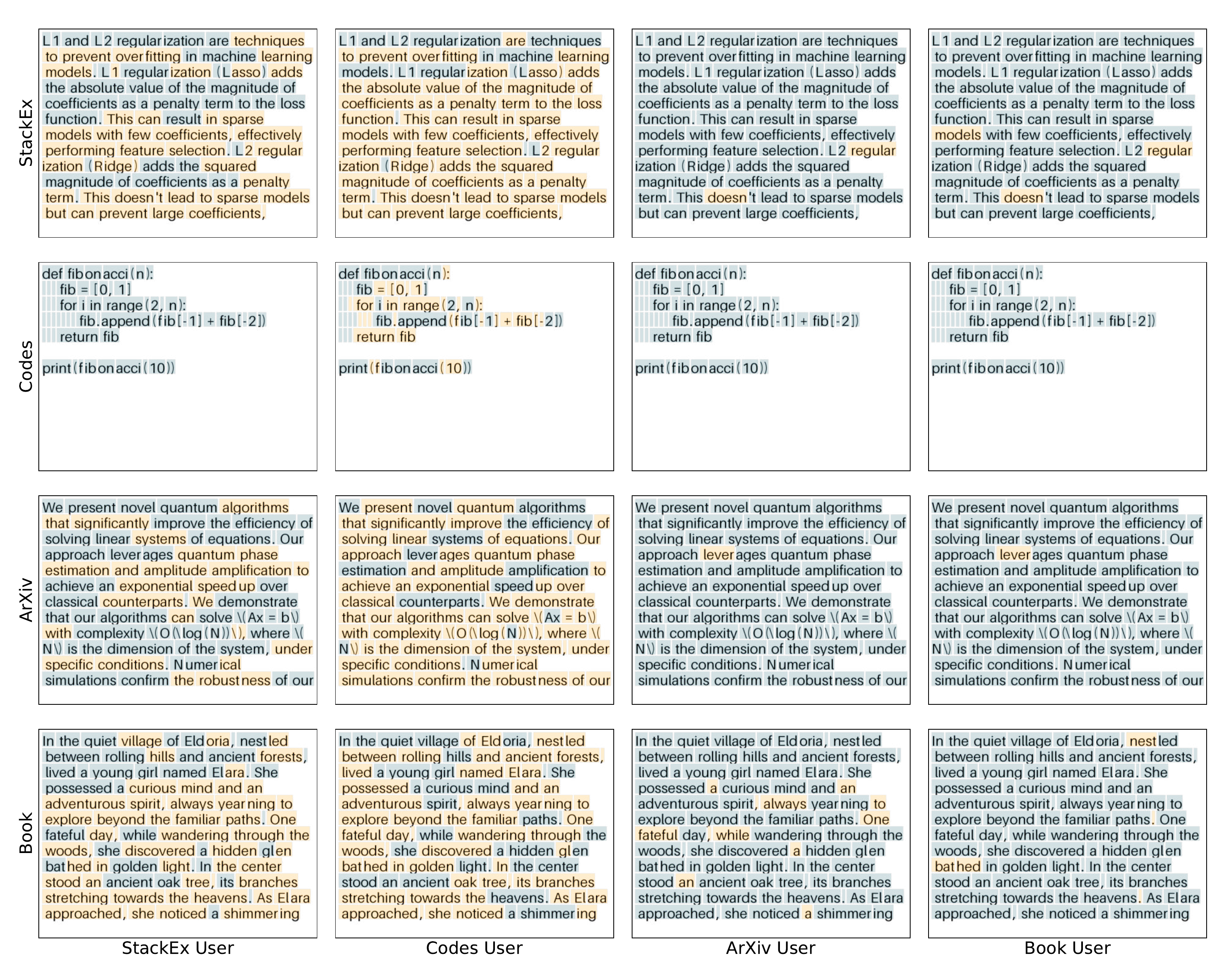}
    \caption{Visualization of token-level routing results for \texttt{CoMiGS-1G1S} trained on SlimPajama. Tokens are colored with the first expert choice at the 11th (last) layer. \textcolor{generalist}{Orange} denotes the generalist and \textcolor{specialist}{blue} denotes the specialist. Diagonal entries are in-distribution texts and off-diagonal entries are out-of-distribution texts. Texts are generated by ChatGPT.}
    \label{fig:colored-tokens-layer11}
\end{figure}

\begin{figure}
    \centering
\includegraphics[width=\textwidth]{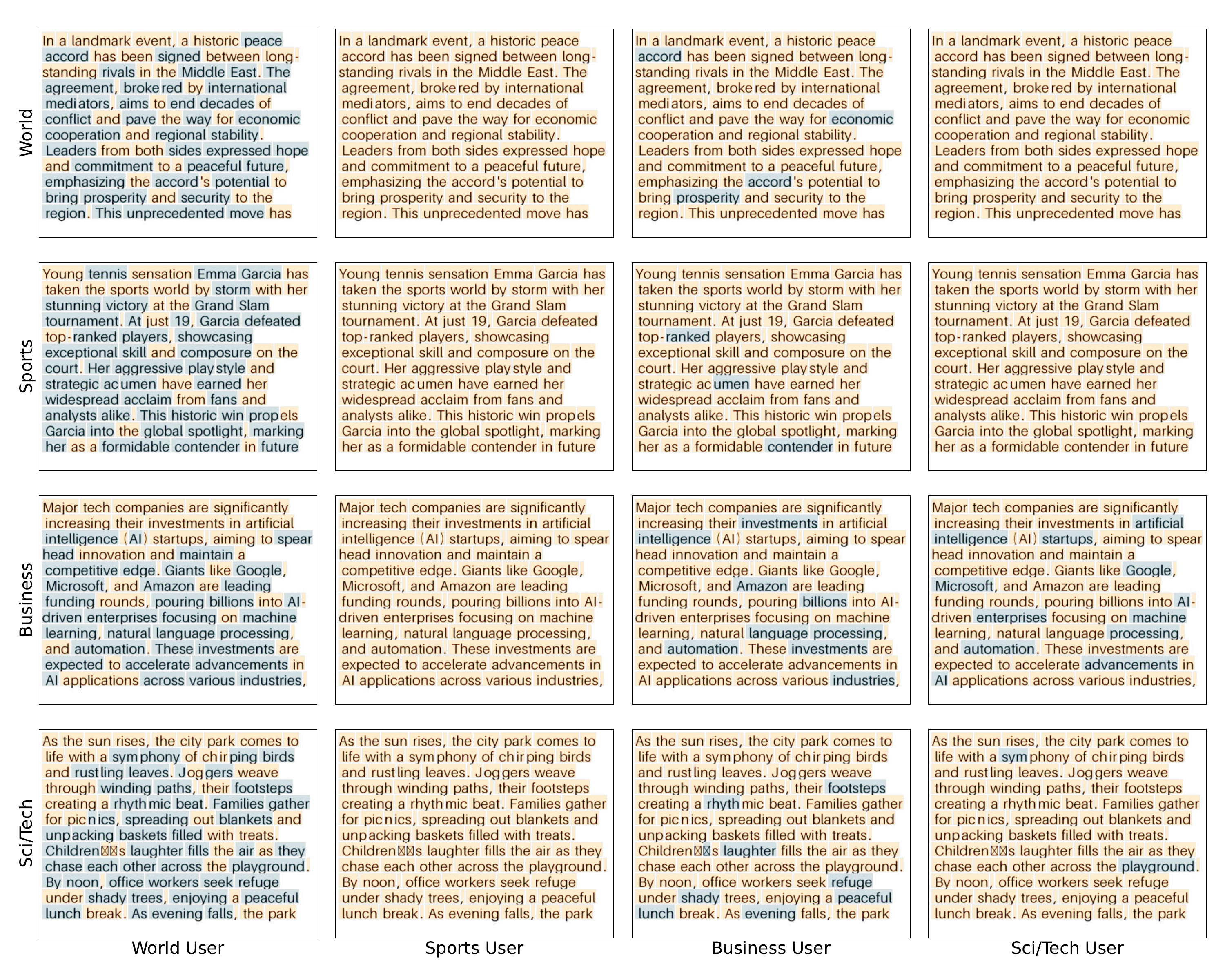}
    \caption{Visualization of token-level routing results for \texttt{CoMiGS-1G1S} trained on AG News. Tokens are colored with the first expert choice at the 0th (first) layer. \textcolor{generalist}{Orange} denotes the generalist and \textcolor{specialist}{blue} denotes the specialist. Diagonal entries are in-distribution texts and off-diagonal entries are out-of-distribution texts. Texts are generated by ChatGPT.}
    \label{fig:colored-tokens-layer0-agnews}
\end{figure}

\begin{figure}
    \centering
\includegraphics[width=\textwidth]{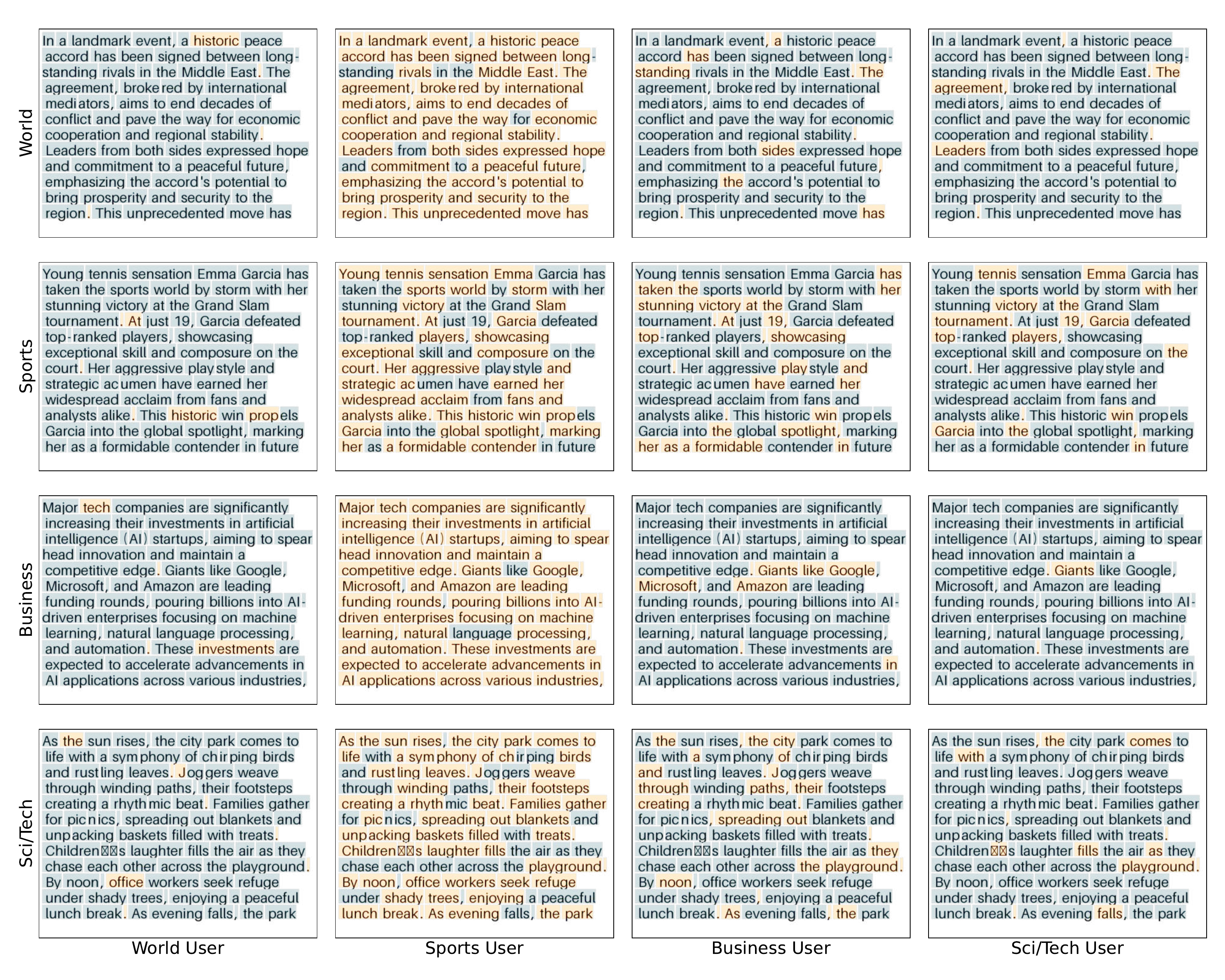}
    \caption{Visualization of token-level routing results for \texttt{CoMiGS-1G1S} trained on AG News. Tokens are colored with the first expert choice at the 5th (middle) layer. \textcolor{generalist}{Orange} denotes the generalist and \textcolor{specialist}{blue} denotes the specialist. Diagonal entries are in-distribution texts and off-diagonal entries are out-of-distribution texts. Texts are generated by ChatGPT.}
    \label{fig:colored-tokens-layer05-agnews}
\end{figure}

\begin{figure}
    \centering
\includegraphics[width=\textwidth]{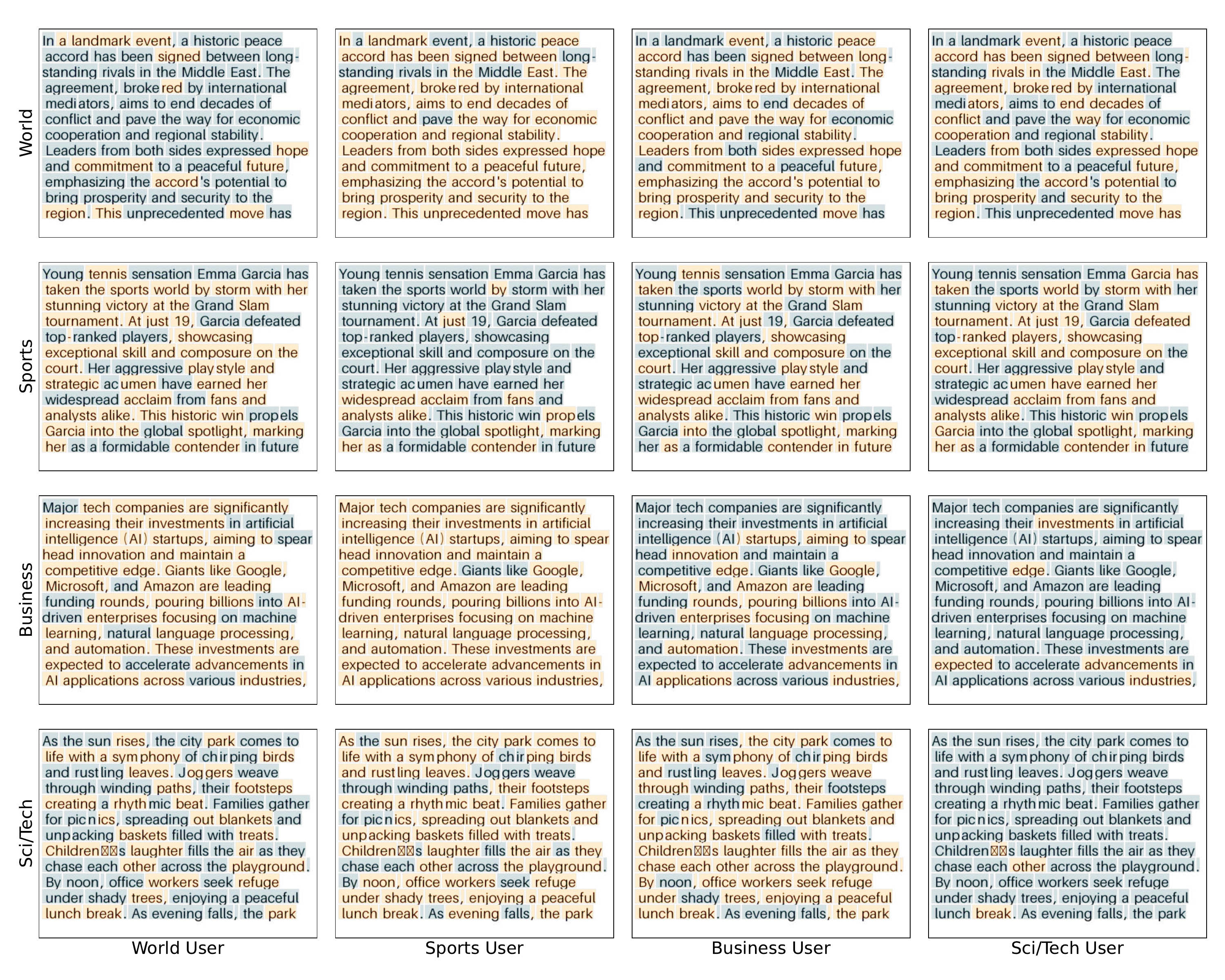}
    \caption{Visualization of token-level routing results for \texttt{CoMiGS-1G1S} trained on AG News. Tokens are colored with the first expert choice at the 11th (last) layer. \textcolor{generalist}{Orange} denotes the generalist and \textcolor{specialist}{blue} denotes the specialist. Diagonal entries are in-distribution texts and off-diagonal entries are out-of-distribution texts. Texts are generated by ChatGPT.}
    \label{fig:colored-tokens-layer11-agnews}
\end{figure}

\newpage

\section{Alternating Minimization Convergence}
\label{app: convergence}

\subsection{Notation}

Let us recall our notation from Sections~\ref{SubsectionAlgorithm} and \ref{SubsectionConvergence}.
We have two differentiable functions $f_1(\Thetav, \Phiv) \equiv f_{\text{valid}}(\Thetav, \Phiv)$ and $f_2(\Thetav, \Phiv)\equiv f_{\text{train}}(\Thetav, \Phiv)$
that constitute our problem.
For the sake of generality, let us assume that the target variables $\Thetav$ and $\Phiv$ belong to their corresponding feasible convex sets $Q$ and $\Omega$,
$$
\ba{rcl}
\Thetav & \in & Q \;\; \subseteq \;\; \R^{|\Thetav|},
\qquad
\Phiv \;\; \in \;\; \Omega \;\; \subseteq \;\; \R^{|\Phiv|}.
\ea
$$
Then, we consider the following \textit{alternating minimization process},
starting from some initial $(\Thetav_0, \Phiv_0)$, for every $k \geq 0$:
\beq \label{Process1}
\boxed{
\ba{rcl}
\Phiv_{k + 1} & = & \argmin\limits_{\Phiv \in \Omega} f_1(\Thetav_k, \Phiv), \\[10pt]
\Thetav_{k + 1} & = & \argmin\limits_{\Thetav \in Q} f_2(\Thetav, \Phiv_{k + 1}).
\ea
}
\eeq
If $f_1 \equiv f_2$ that would be a standard alternation minimization as for minimizing one function $f_1$. However, in our setting $f_1$ and $f_2$ can be different.

For a fixed $\Thetav$ and $\Phiv$, let us denote the corresponding $\argmin$ operators by
$$
\ba{rcl}
u_1(\Thetav) & := & \argmin\limits_{\Phiv \in \Omega} f_1(\Thetav, \Phiv)
\ea
$$
and
$$
\ba{rcl}
u_2(\Phiv) & := & \argmin\limits_{\Thetav \in Q} f_2(\Thetav, \Phiv).
\ea
$$
Using this notation, we can rewrite algorithm~(\ref{Process1}) as follows:
\beq \label{Process2}
\boxed{
\ba{rcl}
\Phiv_{k + 1} & = & u_1(\Thetav_k), \qquad 
\Thetav_{k + 1} \;\; = \;\; u_2(\Phiv_{k + 1}), \qquad k \geq 0.
\ea
}
\eeq
We further define the following operators:
$$
\ba{rcl}
T(\Thetav) & := & u_2(u_1(\Thetav)) \;\; \in \;\; Q, \qquad \Thetav \;\; \in \;\; Q, \\[10pt]
P(\Phiv) & := & u_1(u_2(\Phiv)) \;\; \in \;\; \Omega, \qquad \Phiv \;\; \in \;\; \Omega.
\ea
$$
With this notation, we can rewrite the sequence $\{ \Thetav_k \}_{k \geq 0}$ simply as
\beq \label{ThetaSeq}
\boxed{
\ba{rcl}
\Thetav_{k + 1} & = & T(\Thetav_k), \qquad k \geq 0.
\ea
}
\eeq
We use the following \textbf{main assumption} on functions $f_1$ and $f_2$:
\setcounter{assumption}{0}
\begin{assumption} \label{Appendix-Assumption-Fixed}
	There exist $\Thetav^{\star} \in Q$ and $\Phiv^{\star} \in \Omega$ such that
	\beq \label{FixedPoint2}
	\ba{rcl}
	\Thetav^{\star} & = & T(\Thetav^{\star}) \qquad \text{and} \qquad
	\Phiv^{\star} \;\; = \;\; P(\Phiv^{\star})
	\ea
	\eeq
\end{assumption}

\begin{remark}
	Note that if $f_1 \equiv f_2 \equiv f$, condition~(\ref{FixedPoint2}) holds
	for the global minimizer of our function 
        $(\Thetav^{\star}, \Phiv^{\star}) = \argmin\limits_{\Thetav \in Q, \Phiv \in Q} f(\Thetav, \Phiv)$.
\end{remark}

Clearly, this assumption should hold if functions $f_1$ and $f_2$ are \textit{sufficiently close}: $f_1 \approx f_2$, or in case of overparametrized models. 
It remains an interesting open question: what are the general and joint conditions 
        on $f_1$ and $f_2$ that imply~(\ref{FixedPoint2}).

\subsection{Contraction and Convergence}
\label{SubsectionContraction}

As we will see, it is natural to assume that operators $u_1$ and $u_2$
are \textit{contractions}. 
We will provide a working example of our setting
in the next section, where this condition will hold.
We assume to have some norms fixed on $Q$ and $\Omega$, that are not necessarily Euclidean. For simplicity, and when it is clear from the context, we will use the same symbol $\| \cdot \|$ 
for both norms, even though they can be different for spaces of $\Thetav$ and $\Phiv$.

\begin{assumption} \label{Appendix-Assumption-Contraction}
	Let $u_1$ and $u_2$ be Lipschitz with some constants $\lambda_1, \lambda_2 > 0$:
	\beq \label{Contraction2}
	\ba{rcl}
	\| u_1(\Thetav) - u_1(\bar{\Thetav}) \| 
    & \leq & \lambda_1 \|\Thetav - \bar{\Thetav} \|, \qquad \forall \Thetav, \bar{\Thetav} \in Q, \\[10pt]
	\| u_2(\Phiv) - u_2(\bar{\Phiv}) \| 
    & \leq & \lambda_2 \|\Phiv - \bar{\Phiv} \|, \qquad \forall \Phiv, \bar{\Phiv} \in \Omega.
	\ea
	\eeq
\end{assumption}

Under these assumptions we can show the convergence of the sequence $\{ \Thetav_{k} \}_{k \geq 0}$ generated by~(\ref{ThetaSeq}). Indeed, for every $k \geq 0$, we have
$$
\ba{rcl}
\| \Thetav_{k + 1} - \Thetav^{\star} \| & = & \| T(\Thetav_k) - \Thetav^{\star} \|
\;\; \overset{(\ref{FixedPoint2})}{=} \;\;
\| T(\Thetav_k) - T(\Thetav^{\star}) \| \\
\\
& = &
\| u_2(u_1(\Thetav_k)) - u_2(u_1(\Thetav^{\star})) \|
\;\; \overset{(\ref{Contraction})}{\leq} \;\;
\lambda_2 \| u_1(\Thetav_k) - u_1(\Thetav^{\star}) \| 
\;\; \overset{(\ref{Contraction})}{\leq} \;\;
\lambda_1 \lambda_2 \|\Thetav_k - \Thetav^{\star} \|,
\ea
$$
and we see that $\Thetav_k \to \Thetav^{\star}$ with the linear rate. The same reasoning can be applied to the sequence $\{ \Phiv_k \}_{k \geq 1}$.
Thus, we have established the following general convergence result.

\begin{theorem}[Theorem~\ref{Theorem-Convergence}]
\label{Appendix-Theorem-Convergence}
    Let Assumptions \ref{Appendix-Assumption-Fixed},~\ref{Appendix-Assumption-Contraction}
    hold and $\lambda_1 \cdot \lambda_2 < 1$.
    Then, the sequence $(\Thetav_k, \Phiv_{k})_{k \geq 0}$ generated by alternating process~(\ref{Process1})
    converges to $(\Thetav^\star, \Phiv^\star)$ linearly, for every $k \geq 0$:
    \beq \label{ConvergenceRate}
    \ba{rcl}
    \| \Thetav_k - \Thetav^{\star} \| & \leq & (\lambda_1 \lambda_2)^k \| \Thetav_0 - \Thetav^{\star} \|, \\
    \\
    \| \Phiv_k - \Phiv^{\star} \| & \leq & (\lambda_1 \lambda_2)^k \| \Phiv_0 - \Phiv^{\star} \|.
    \ea
    \eeq
\end{theorem}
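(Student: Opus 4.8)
The plan is to recognize each of the two iterate sequences as the orbit of a single self-map that is a strict contraction, and then apply the elementary telescoping estimate for contractions together with the fixed points supplied by Assumption~\ref{Appendix-Assumption-Fixed}. First I would record that the alternating scheme~(\ref{Process1}) is exactly $\Phiv_{k+1} = u_1(\Thetav_k)$ and $\Thetav_{k+1} = u_2(\Phiv_{k+1})$, so that the $\Thetav$-iterates satisfy the one-line recursion $\Thetav_{k+1} = u_2(u_1(\Thetav_k)) = T(\Thetav_k)$, while for $k \ge 1$ the $\Phiv$-iterates satisfy $\Phiv_{k+1} = u_1(u_2(\Phiv_k)) = P(\Phiv_k)$.

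The core step is the contraction estimate for $T$, and identically for $P$. For arbitrary $\Thetav, \bar{\Thetav} \in Q$, applying Assumption~\ref{Appendix-Assumption-Contraction} twice gives $\| T(\Thetav) - T(\bar{\Thetav}) \| = \| u_2(u_1(\Thetav)) - u_2(u_1(\bar{\Thetav})) \| \le \lambda_2 \| u_1(\Thetav) - u_1(\bar{\Thetav}) \| \le \lambda_1 \lambda_2 \| \Thetav - \bar{\Thetav} \|$, and the same chain shows $\| P(\Phiv) - P(\bar{\Phiv}) \| \le \lambda_1 \lambda_2 \| \Phiv - \bar{\Phiv} \|$ for all $\Phiv, \bar{\Phiv} \in \Omega$. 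Since $\lambda_1 \lambda_2 < 1$, both $T$ and $P$ are contractions, so each admits a unique fixed point; by Assumption~\ref{Appendix-Assumption-Fixed} these must be $\Thetav^{\star}$ and $\Phiv^{\star}$ respectively.

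The linear rate is then immediate by induction on $k$. Using $\Thetav^{\star} = T(\Thetav^{\star})$ and the contraction bound, $\| \Thetav_{k+1} - \Thetav^{\star} \| = \| T(\Thetav_k) - T(\Thetav^{\star}) \| \le \lambda_1 \lambda_2 \| \Thetav_k - \Thetav^{\star} \|$, hence $\| \Thetav_k - \Thetav^{\star} \| \le (\lambda_1\lambda_2)^k \| \Thetav_0 - \Thetav^{\star} \| \to 0$; the identical computation with $P$ and $\Phiv^{\star} = P(\Phiv^{\star})$ in place of $T$ and $\Thetav^{\star}$ yields the bound on $\| \Phiv_k - \Phiv^{\star} \|$. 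I would add a short remark that, since $\Phiv_0$ is merely the initializer and does not enter the first update of~(\ref{Process1}), the cleanest $\Phiv$-estimate is $\| \Phiv_k - \Phiv^{\star} \| \le (\lambda_1\lambda_2)^{k-1} \| \Phiv_1 - \Phiv^{\star} \|$ for $k \ge 1$, which takes the displayed form $(\lambda_1\lambda_2)^k \| \Phiv_0 - \Phiv^{\star} \|$ under the natural consistency $\Thetav_0 = u_2(\Phiv_0)$, since then $\Phiv_1 = P(\Phiv_0)$ as well. One may also note $\Phiv^{\star} = u_1(\Thetav^{\star})$: setting $\hat{\Phiv} \coloneqq u_1(\Thetav^{\star})$ yields $u_2(\hat{\Phiv}) = T(\Thetav^{\star}) = \Thetav^{\star}$, hence $P(\hat{\Phiv}) = u_1(\Thetav^{\star}) = \hat{\Phiv}$, so $\hat{\Phiv} = \Phiv^{\star}$ by uniqueness.

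This argument is short and essentially mechanical, so I do not expect a genuine obstacle; the two points deserving a word of care are (i) that the partial-minimization operators $u_1, u_2$ are single-valued, so that $T$ and $P$ are honest maps and the Lipschitz inequalities of Assumption~\ref{Appendix-Assumption-Contraction} are meaningful --- which is automatic whenever $f_1, f_2$ are strictly convex in the block being minimized, covering the convex-quadratic and linear-expert instantiations discussed elsewhere in the paper --- and (ii) the $\Phiv$-indexing subtlety noted above. I would present both as a brief remark accompanying the proof rather than weaving them into the main chain of inequalities.
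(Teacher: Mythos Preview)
Your proposal is correct and follows exactly the paper's approach: rewrite the alternating scheme as $\Thetav_{k+1}=T(\Thetav_k)$ and $\Phiv_{k+1}=P(\Phiv_k)$, use Assumption~\ref{Appendix-Assumption-Fixed} to identify $\Thetav^\star, \Phiv^\star$ as fixed points, and chain the two Lipschitz bounds of Assumption~\ref{Appendix-Assumption-Contraction} to get the one-step contraction $\|\Thetav_{k+1}-\Thetav^\star\|\le \lambda_1\lambda_2\|\Thetav_k-\Thetav^\star\|$. Your additional remarks on single-valuedness of $u_1,u_2$ and the $\Phiv$-indexing subtlety are more careful than the paper itself, which simply states that ``the same reasoning can be applied to the sequence $\{\Phiv_k\}_{k\ge 1}$'' without further comment.
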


\begin{example} 
Consider the following quadratic objective
$$
\ba{rcl}
f(\Thetav, \Phiv) & = & 
\frac{1}{2}\la \mA \Thetav, \Thetav \ra
+ \frac{1}{2} \la \mB \Phiv, \Phiv \ra
+ \la \mC \Thetav, \Phiv \ra,
\ea
$$
where $\mA = \mA^{\top} \in \R^{|\Thetav| \times |\Thetav|}$
and $\mB = \mB^{\top} \in \R^{|\Phiv| \times |\Phiv|}$
are symmetric matrices,
and $\mC \in \R^{|\Phiv| \times |\Thetav|}$.
We assume that $f$ is strictly convex, which means
$$
\ba{rcl}
\mH & = & 
\left[ {\begin{array}{cc}
   \mA & \mC^{\top} \\
   \mC & \mB \\
  \end{array} } \right] \;\; \succ \;\; \vzero.
\ea
$$
Clearly, for this objective, we have $\Thetav^{\star} = \vzero$ and $\Phiv^{\star} = \vzero$. Then
$$
\ba{rcl}
u_1(\Thetav) & := & \argmin\limits_{\Phiv} f(\Thetav, \Phiv)
\;\; = \;\;
- \mB^{-1} \mC \Thetav \qquad \text{and} \qquad \\
\\
u_2(\Phiv) & := & \argmin\limits_{\Thetav} f(\Thetav, \Phiv)
\;\; = \;\;
- \mA^{-1} \mC^{\top} \Phiv.
\ea
$$
Hence, the composition operator $T := u_2 \circ u_1$
is linear: 
\beq \label{TExampleQuadratic}
\ba{rcl}
T(\Thetav) & = & \mA^{-1} \mC^{\top} \mB^{-1} \mC \Thetav,
\ea
\eeq
and it holds
$$
\ba{rcl}
\| T(\Thetav) - \Thetav^{\star}\| & \leq &
\| \mA^{-1} \mC^{\top} \mB^{-1} \mC \| 
\cdot \| \Thetav - \Thetav^{\star} \|.
\ea
$$
Now, denoting by $\mu > 0$ and $L \geq \mu$ the smallest and the largest eigenvalues
of matrix $\mH$ correspondingly, and using the Schur complement, we 
conclude that
\beq \label{SpecBounds}
\ba{rcl}
\mu \mI & \preceq & \mA \;\; \preceq \;\; L \mI,
\qquad \text{and} \qquad
\mu \mI \;\; \preceq \;\; \mA - \mC^{\top} \mB^{-1} \mC \;\; \preceq \;\; L \mI,
\ea
\eeq
from which we are able to bound the norm of our matrix as follows:
$$
\ba{rcl}
\| \mA^{-1} \mC^{\top} \mB^{-1} \mC \|  & = &
\| \mA^{-1/2} (\mC^{\top} \mB^{-1} \mC) \mA^{-1/2} \|
\;\; \overset{(\ref{SpecBounds})}{\leq} \;\;
\frac{L - \mu}{L} \;\; < \;\; 1,
\ea
$$
which proves the contraction property. \qed
\end{example} 

\begin{example}
    Note that for a general differentiable function $f$,
    using the Taylor expansion,
    the operator $T = u_2 \circ u_1$,
    where $u_1(\Thetav) := \argmin\limits_{\Phiv} f(\Thetav, \Phiv)$ and
    $u_2(\Phiv) := \argmin\limits_{\Thetav} f(\Thetav, \Phiv)$, can be expressed as follows
    (compare with (\ref{TExampleQuadratic})):
    $$
    \ba{rcl}
    T(\Thetav) - \Thetav^{\star}
    & = & 
    \mH_{11}^{-1} \mH_{12} \mH_{22}^{-1} \mH_{21} (\Thetav - \Thetav^{\star}),
    \ea
    $$
    where
    $$
    \ba{rcl}
    \mH_{11} & = &  
    \int\limits_0^1 \frac{\partial^2 f}{\partial \Thetav^2} (\Thetav^{\star} + \tau(T(\Thetav) - \Thetav^{\star}), \Phiv^{\star} + \tau(u_1(\Thetav) - \Phiv^{\star})) d\tau,
    \\
    \\
    \mH_{12} & = & 
    \int\limits_0^1 \frac{\partial^2 f}{\partial \Thetav \partial \Phiv} (\Thetav^{\star} + \tau(T(\Thetav) - \Thetav^{\star}), \Phiv^{\star} + \tau(u_1(\Thetav) - \Phiv^{\star})) d\tau,
    \\
    \\
    \mH_{22} & = & 
    \int\limits_0^1 \frac{\partial^2 f}{\partial \Phiv^2} (\Thetav^{\star} + \tau(\Thetav - \Thetav^{\star}), \Phiv^{\star} + \tau(u_1(\Thetav) - \Phiv^{\star})) d\tau,
    \\
    \\
    \mH_{21} & = & 
    \int\limits_0^1 \frac{\partial^2 f}{\partial \Phiv \partial \Thetav} (\Thetav^{\star} + \tau(\Thetav - \Thetav^{\star}), \Phiv^{\star} + \tau(u_1(\Thetav) - \Phiv^{\star})) d\tau.
    \ea
    $$
    Therefore, assuming that the Hessian is strictly positive definite and Lipschitz continuous in a neighborhood of the solution,
    localizing the current point to the neighborhood, $\Thetav \approx \Thetav^{\star}$
    and $\Phiv \approx \Phiv^{\star}$,
    we can obtain the contraction property, as in the previous example
    (see, e.g., Theorem 1.2.5 in \cite{nesterov2018lectures} for the local analysis of Newton's method).
\end{example}

\subsection{Linear Modeling and Decoupling}
\label{SubsectionLinear}

In this section, let us study an important example of \textit{linear models},
applicable to both experts and the router.
As we will show, in this case and under very mild assumptions we can justify
all conditions from the previous section and therefore obtain the global linear
convergence for our alternating process.

\paragraph{Problem Formulation} For simplicity, we consider the case of one client and assume that 
training and validation datasets are the same, $\mX^{\text{train}} = \mX^{\text{valid}}$. However, our observations
can be generalized to a more general case of several clients, and different but statistically similar datasets $\mX^{\text{train}} \sim \mX^{\text{valid}}$. Hence, we have, $f_1 \equiv f_2 \equiv f$.
Note that in this case, our bi-level formulation is also equivalent to joint minimization of $f$ w.r.t. all variables.

We assume that our client has one generalist expert model, that we denote by $\thetav^{0} \in \R^d$,
and $N \geq 0$ specialist experts, that we denote by $\thetav^{1}, \ldots \thetav^{N} \in \R^d$.
We compose these models together as matrix $\Thetav = (\thetav^{0}, \ldots, \thetav^{N})$.
In principle, different models can have different expressivity, which we take into account in our modeling by a convex set of constraints: $\Thetav \in Q \subseteq \R^{d \times (N + 1)}$.

We denote by $\phiv^0, \ldots, \phiv^N \in \R^d$ the parameters of our Router, composed together
as matrix $\Phiv = (\phiv^0, \ldots, \phiv^N)$,
which can also be constrained by a convex set:
$\Phiv \in \Omega \subseteq \R^{d \times (N + 1)}$.
For a given data input $\vx \in \R^d$, the Router decides which experts to use with the SoftMax operation $\vx \mapsto \piv_{\Phiv}(\vx) \in \Delta_{N}$, where
$$
\ba{rcl}
\Delta_{N} & := & \Bigl\{ \,  \vy \in \R_+^{N + 1} \; : \; \sum\limits_{j = 0}^{N} y^{(j)} = 1   \, \Bigr\}
\ea
$$
is the standard Simplex, and
\beq \label{PiDef}
\ba{rcl}
\pi^{(j)}_{\Phiv}(\vx) & := & 
\frac{\exp( \la  \phiv^j, \vx  \ra  )}{\sum_{k = 0}^N \exp( \la \phi^k, \vx \ra ) }.
\ea
\eeq
Under these assumptions, we set the following structure of our optimization objective,
\beq \label{LinModel}
\ba{rcl}
f(\Thetav, \Phiv) & = & 
\frac{1}{n}\sum\limits_{i = 1}^n
\ell_i\biggl(  \sum\limits_{j = 0}^N \pi^{(j)}_{\Phiv}(\vx_i) \cdot \la \thetav^j, \vx_i \ra  
\biggr)
+ \frac{\alpha}{2}\Bigl( \| \Thetav \|_F^2 + \| \Phiv \|_F^2 \Bigr),
\ea
\eeq
where $\vx_1, \ldots, \vx_n$ are given data vectors,
and $\ell_i(\cdot), 1 \leq i \leq n$ are the corresponding convex losses (e.g. the logistic loss
for binary classification, or the quadratic loss for regression problem).
We use $\alpha \geq 0$ as a regularization parameter, which can also be seen as the \textit{weight decay},
and $\| \cdot \|_F$ is the Frobenius norm of a matrix.

\paragraph{Decoupling} Let us introduce \textit{the auxiliary variables}, $\lambdav^{i} \in \Delta_{N}$, $1 \leq i \leq n$,
and $\Lambdav = (\lambdav^1, \ldots, \lambdav^n) \in \Delta_{N}^n \subseteq \R^{(N + 1) \times n}$, which is a column-stochastic matrix.
Employing the matrix notation,
we can rewrite our problem in the following form:
\beq \label{MainProblemLin}
\ba{rcl}
\min\limits_{\substack{\Thetav \in Q, \Phiv \in \Omega\\[1pt] \!\!\!\Lambdav \in \Delta_{N}^n}}
\biggl\{
\frac{1}{n} \sum\limits_{i = 1}^n
\ell_i\Bigl( \la \lambdav^i, \Thetav^{\top} \vx_i \ra \Bigr)
+ \frac{\alpha}{2}\Bigl( \| \Thetav \|_F^2 + \| \Phiv \|_F^2 \Bigr)
\; : \;
\lambdav^i \; = \; \pi_{\Phiv}(\vx_i), \;\; 1 \leq i \leq n
\biggr\}.
\ea
\eeq

Now, we apply the relaxation of constrained problem~(\ref{MainProblemLin})
by the following \textit{decouple} of $\lambda^{i}$ from $\pi_{\Phiv}(\vx_i)$,
with some parameter $\mu \geq 0$ and a \textit{distance function} $V: \Delta_{N} \times \Delta_{N} \to \R_+$
between distributions:
\beq \label{RelaxedProblem}
\ba{rcl}
\min\limits_{\substack{\Thetav \in Q, \Phiv \in \Omega\\[1pt] \!\!\!\Lambdav \in \Delta_{N}^n}}
\biggl\{
F_{\mu}(\Theta, \Phiv, \Lambdav)
& := &
\frac{1}{n} \sum\limits_{i = 1}^n
\ell_i\Bigl( \la \lambdav^i, \Thetav^{\top} \vx_i \ra \Bigr)
+ \frac{\alpha}{2}\Bigl( \| \Thetav \|_F^2 + \| \Phiv \|_F^2 \Bigr)
+ \frac{\mu}{2n} \sum\limits_{i = 1}^n V(\lambdav^{i}; \piv_{\Phiv}(\vx_i))
\biggr\}.
\ea
\eeq
A natural choice for $V$ is the 
\textit{Kullback–Leibler divergence}, which gives, for every $1 \leq i \leq n$:
$$
\ba{rcl}
V(\lambdav^{i}; \piv_{\Phiv}(\vx_i))
& := & 
\sum\limits_{j = 0}^{N}
\bigl[ \lambdav^{i} \bigr]^{(j)} \ln \bigl[ \lambdav^{i} \bigr]^{(j)} 
- \sum\limits_{j = 0}^{N}
\bigl[ \lambdav^{i} \bigr]^{(j)} \ln 
\bigl[ \piv_{\Phiv}(\vx_i) \bigr]^{(j)} \\
\\
& \overset{(\ref{PiDef})}{=} &
\sum\limits_{j = 0}^{N}
\bigl[ \lambdav^{i} \bigr]^{(j)} 
\Bigl( \ln \bigl[ \lambdav^{i} \bigr]^{(j)} - \la \phiv^j, \vx_i \ra \Bigr)
+ \ln\Bigl( \sum\limits_{j = 0}^N \exp\bigl( \la \phiv^j, \vx_i \ra \bigr) \Bigr) \\
\\
& = & 
d(\lambdav^i) - \la \lambdav^i, \Phiv^{\top} \vx_i \ra + s(\Phiv^{\top} \vx_i),
\ea
$$
where
$$
\ba{rcl}
d(\lambdav) & := & \sum\limits_{j = 0}^N \lambda^{(j)} \ln \lambda^{(j)}, 
\qquad \lambdav \in \Delta_{N},
\ea
$$
is the negative entropy, and
$$
\ba{rcl}
s(\vy) & := & \ln\Bigl( \sum\limits_{j = 0}^N \exp y^{(j)} \Bigr),
\qquad \vy \in \R^{N + 1}
\ea
$$
is the log-sum-exp function.
Note that both $d(\cdot)$ and $s(\cdot)$ are convex functions on their domains. Moreover,
it is well known that $d(\cdot)$ is \textit{strongly convex} w.r.t. $\ell_1$-norm
(see, e.g., Example~2.1.2 in \cite{nesterov2018lectures}):
\beq \label{StronglyConvexD}
\ba{rcl}
\la \nabla^2 d(\lambdav) \vh, \vh \ra & \geq & \| \vh \|_1^2, 
\qquad \lambdav \in \Delta_{N}, 
\vh \in \R^{N + 1}.
\ea
\eeq

Therefore, we obtain the following \textbf{decoupled optimization formulation}:
\beq \label{RelaxedProblem2}
\boxed{
\ba{cl}
\min\limits_{\substack{\Thetav \in Q, \Phiv \in \Omega\\[1pt] \!\!\!\Lambdav \in \Delta_{N}^n}}
\biggl\{
&F_{\mu}(\Thetav, \Phiv, \Lambdav) \\[5pt]
& = 
\frac{1}{n} \sum\limits_{i = 1}^n
\biggl[
\ell_i\Bigl( \la \lambdav^i, \Thetav^{\top} \vx_i \ra \Bigr)
+ \mu\Bigl( d(\lambdav^{i}) + s(\Phiv^{\top} \vx_i) - \la \lambdav^i, \Phiv^{\top} \vx_i\ra  \Bigr)
\biggr]
+ \frac{\alpha}{2}\Bigl( \| \Thetav \|_F^2 + \| \Phiv \|_F^2 \Bigr)
\biggr\}.
\ea
}
\eeq
It is clear that setting parameter $\mu := +\infty$, we obtain
that~(\ref{RelaxedProblem2}) is equivalent 
to our original problem~(\ref{MainProblemLin}).
However, for $\mu < +\infty$ we obtain more flexible formulation
with auxiliary distributions $\lambdav^i \in \Delta_{N}$,
each for every data sample $1 \leq i \leq n$, that makes it easier to treat the problem.
Parameters $( \lambdav^i )$ has an interpretation of \textit{latent variables},
which makes our approach similar to the classical EM-algorithm~\cite{jordan1994hierarchical}. We note a similar work from \citet{almansoori2024collaborative}, which proposes
to train a mixture of generalists on local routers, which resembles a simplified version of our method.

It is clear that function $F_{\mu}(\Thetav, \Phiv, \Lambdav)$ is \textit{partially convex}:
it is convex w.r.t $(\Thetav, \Phiv)$ when $\Lambdav$ is fixed, and it is also convex
 w.r.t. $\Lambdav$ when $(\Thetav, \Phiv)$ is fixed.
 
In what follows, we show that under very mild conditions and choosing regularization parameter $\alpha, \mu \geq 0$ sufficiently large, we can ensure that $F_{\mu}(\cdot)$ is \textit{jointly strongly convex}, regardless of non-convex cross terms: $\ell_i\bigl( \la \lambdav^i, \Thetav^{\top} \vx_i \ra \bigr)$ and $\la \lambdav^i, \Phiv^{\top} \vx_i\ra$.
Our theory generalizes a recent approach to soft clustering~\cite{nesterov2020soft}.
With this technique, we will be able to show the global linear convergence rate for the alternating minimization approach that we discussed in the previous sections.

\paragraph{Joint Strong Convexity}
Let us consider the $i$-th term of our objective~(\ref{RelaxedProblem2}) that correspond to the data sample with index $1 \leq i \leq n$.
Omitting extra indices, we obtain the following function,
\beq \label{OneSampleF}
\ba{rcl}
F(\Thetav, \Phiv, \lambdav) 
& = & 
\ell\Bigl( \la \lambdav, \Thetav^{\top} \vx \ra \Bigr)
-\mu \la \lambdav, \Phi^{\top} \vx \ra 
+ \frac{\alpha}{2} \Bigl( \| \Thetav \|_F^2 + \| \Phiv \|_F^2 \Bigr)
+ \mu d(\lambdav) + \mu s(\Phiv^{\top} \vx ),
\ea
\eeq
where $\Thetav \in Q$, $\Phiv \in \Omega$, $\lambdav \in \Delta_{N}$.
Our goal is to ensure that~(\ref{OneSampleF}) is strongly convex w.r.t to 
the standard Euclidean norm of the joint variable. 
Namely, we establish the following result.
\begin{proposition}
Let the loss function $\ell(\cdot)$ be convex
and assume that its first derivative is bounded: $\rho \geq \max_t \ell'(t)$.
Assume that the regularization coefficient is sufficiently large:
\beq \label{BoundAlphaStronglyConvex}
\ba{rcl}
\alpha & \geq &
2\|\vx\|^2
\max\bigl\{ \mu, \frac{\rho^2}{\mu} \bigr\}.
\ea
\eeq
Then the objective in~(\ref{OneSampleF}) is strongly convex.
\end{proposition}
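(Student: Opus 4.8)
The plan is to establish joint strong convexity of the one-sample function $F(\Thetav, \Phiv, \lambdav)$ in~(\ref{OneSampleF}) by computing its Hessian with respect to the stacked variable $\vz = (\Thetav, \Phiv, \lambdav)$ and showing it is uniformly bounded below by a positive multiple of the identity. The function is a sum of terms, most of which are already convex: the regularizer $\frac{\alpha}{2}(\|\Thetav\|_F^2 + \|\Phiv\|_F^2)$ contributes $\alpha \mI$ on the $(\Thetav,\Phiv)$-block, the negative entropy $\mu d(\lambdav)$ contributes at least $\mu \mI$ (in the $\ell_1$ sense, hence also Euclidean on the simplex) on the $\lambdav$-block by~(\ref{StronglyConvexD}), the log-sum-exp term $\mu s(\Phiv^\top \vx)$ is convex in $\Phiv$, and $\ell(\la \lambdav, \Thetav^\top \vx\ra)$ is convex in the pair $(\Thetav,\lambdav)$ only along the scalar direction it sees — but not jointly, since $\la\lambdav,\Thetav^\top\vx\ra$ is bilinear. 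The only genuinely non-convex pieces are therefore the cross terms coming from $\ell(\la\lambdav,\Thetav^\top\vx\ra)$ (whose Hessian has an indefinite off-diagonal $\Thetav$–$\lambdav$ block proportional to $\ell'$) and the linear-in-disguise bilinear term $-\mu\la\lambdav,\Phi^\top\vx\ra$ (whose Hessian is purely off-diagonal in the $\Phiv$–$\lambdav$ block, with entries of size $\mu\|\vx\|$).

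**First I would** write the Hessian of $F$ in block form, isolating the harmless positive-definite diagonal contributions ($\alpha$ on the $\Thetav$ and $\Phiv$ blocks, $\mu$ on the $\lambdav$ block, plus the nonnegative $\ell''$ and $s''$ pieces) from the two indefinite off-diagonal blocks. Then the strategy is a Schur-complement / completing-the-square argument: to absorb an off-diagonal block $\mB$ coupling blocks with curvature lower bounds $a$ and $b$, it suffices that $ab \geq \|\mB\|^2$ (and a bit more, to leave room for strong convexity). The $\Thetav$–$\lambdav$ coupling has norm at most $\rho\|\vx\|$ (using $\rho \geq \max_t \ell'(t)$ — here I would note one should also control $|\ell'|$ from below, or restrict to the range where this bound applies, and possibly that $\ell'$ appears multiplied by $\vx\vx^\top$ so the bound is $\rho\|\vx\|^2$-ish); it is absorbed against $\alpha$ on the $\Thetav$ side and $\mu$ on the $\lambdav$ side, requiring roughly $\alpha\mu \gtrsim \rho^2\|\vx\|^2$, i.e. $\alpha \gtrsim \rho^2\|\vx\|^2/\mu$. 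The $\Phiv$–$\lambdav$ coupling has norm at most $\mu\|\vx\|$ and is absorbed against $\alpha$ on the $\Phiv$ side and $\mu$ on the $\lambdav$ side, requiring $\alpha\mu \gtrsim \mu^2\|\vx\|^2$, i.e. $\alpha \gtrsim \mu\|\vx\|^2$. Taking $\alpha \geq 2\|\vx\|^2\max\{\mu, \rho^2/\mu\}$ covers both, with the factor $2$ providing the slack that leaves a strictly positive residual curvature, establishing joint strong convexity.

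**A convenient technical device** would be to bound $F$ from below by a quadratic-plus-convex surrogate: replace $\ell$ by its supporting hyperplane at the point of interest (legitimate since $\ell$ is convex, and $\ell'$ bounded by $\rho$ gives a uniform bound on the slope), so that after this relaxation the only non-convexity is the explicit bilinear $\la\lambdav,\Thetav^\top\vx\ra$ entering linearly, and one is left to check positive-definiteness of a fixed quadratic form — exactly the computation in the quadratic Example~1 of the excerpt. Then I would invoke the standard fact that for a $2\times 2$ block quadratic $\begin{bmatrix} a\mI & \mB^\top \\ \mB & b\mI\end{bmatrix}$ one has positive definiteness with modulus $\frac{1}{2}\min\{a,b\}$ whenever $ab \geq 2\|\mB\|^2$, apply it twice (once per coupling, splitting the diagonal budget), and collect the constants.

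**The main obstacle** I anticipate is bookkeeping the interaction of the \emph{two} couplings simultaneously: $\lambdav$ is coupled to \emph{both} $\Thetav$ and $\Phiv$, so one cannot naively apply a $2\times2$ Schur argument twice without double-counting the $\mu\mI$ curvature budget on the $\lambdav$-block. The clean fix is to split that budget — assign $\frac{\mu}{2}\mI$ to absorb the $\Thetav$-coupling and $\frac{\mu}{2}\mI$ to absorb the $\Phiv$-coupling — which is precisely why the threshold carries the factor $2$ and the $\max$ of the two terms $\mu$ and $\rho^2/\mu$ rather than their sum being needed only up to constants. A secondary subtlety is that the natural strong-convexity constant for $d(\cdot)$ is with respect to $\ell_1$, while the proposition asks for Euclidean strong convexity of the joint variable; on the simplex $\|\cdot\|_1 \geq \|\cdot\|_2$, so~(\ref{StronglyConvexD}) gives a valid (possibly conservative) Euclidean lower bound, and I would remark on this explicitly to keep the norms consistent throughout the block argument. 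Everything else is routine linear algebra that I would relegate to a short computation rather than belabor.
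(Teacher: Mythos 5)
Your proposal is correct and takes essentially the same route as the paper: the paper implements your "split the curvature budget" idea explicitly by decomposing $F = g_1 + g_2 + \text{(strongly convex leftover)}$, where $g_1(\Thetav,\lambdav)$ absorbs the $\Thetav$--$\lambdav$ coupling from $\ell$ and $g_2(\Phiv,\lambdav)$ absorbs the $\Phiv$--$\lambdav$ coupling, each assigned $\tfrac{\alpha}{4}$ of the regularizer and $\tfrac{\mu}{4}$ of the entropy curvature, and then verifies convexity of each $g_i$ by bounding the second-derivative quadratic form via Young's inequality --- which is precisely your $2\times 2$ Schur-complement check in disguise, yielding the two conditions $\alpha\mu/2 \geq \rho^2\|\vx\|^2$ and $\alpha\mu/2 \geq \mu^2\|\vx\|^2$ that combine into~(\ref{BoundAlphaStronglyConvex}). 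Your two flagged subtleties are both genuine and worth noting: the paper silently treats $\rho$ as a bound on $|\ell'|$ rather than on $\ell'$ alone (needed since the cross term $\ell'(\cdot)\la\vh,\mH^\top\vx\ra$ can have either sign), and the $\ell_1$-vs-Euclidean mismatch for $d(\cdot)$ is resolved as you say by $\|\cdot\|_1\geq\|\cdot\|_2$.
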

\begin{proof}
Note that our objective (\ref{OneSampleF}) can be separated in $\Phiv$ and $\Thetav$. We construct the following functions $g_1$ and $g_2$.
$$
\ba{rcl}
g_1(\Thetav, \lambdav) & := & 
\ell\Bigl( \la \lambdav, \Thetav^{\top} \vx \ra \Bigr)
+ \frac{\alpha}{4} \|\Thetav\|_F^2 + \frac{\mu}{4} d(\lambdav)
\quad \text{and} \quad
g_2(\Phiv, \lambdav) \;\; := \;\;
-\mu \la \lambdav, \Phiv^{\top} \vx \ra 
+ \frac{\alpha}{4} \|\Phiv \|_F^2 + \frac{\mu}{4} d(\lambdav)
\ea,
$$
 and $F(\Thetav, \Phiv, \lambdav) \equiv g_1(\Thetav, \lambdav)+g_2(\Phiv, \lambdav)
 + \frac{\alpha}{4} \Bigl( \| \Thetav \|_F^2 + \| \Phiv \|_F^2 \Bigr) +\frac{\mu}{2}d(\lambdav) 
 + \mu s(\Phiv^\top \vx)$. Since the log-sum-exp function $s(\cdot)$ is strictly convex, and the negative entropy $d(\cdot)$, as well as the Frobenius norm, are strongly convex, it suffices to prove convexity for $g_1$ and $g_2$.
Computing the second derivative of $g_1$ and applying
it to an arbitrary direction $\vz = [\mH; \vh]$
of corresponding shapes, we get
$$
\ba{rcl}
\la \nabla^2 g_1(\Thetav, \lambdav) \vz, \vz \ra
& = & 
\frac{\alpha}{2} \| \mH \|_F^2 + \frac{\mu}{4} \la \nabla^2 d(\lambdav) \vh, \vh 
\ra \\
\\
& &
+ \; \ell''( \la \lambdav, \Thetav^{\top} \vx \ra )
\cdot  \underbrace{\Bigl[ 
\la \vh, \Thetav^{\top} \vx \ra^2 + \la \lambdav, \mH^{\top} \vx \ra^2
+ \la \vh, \Thetav^{\top} \vx \ra \cdot \la \lambdav, \mH^{\top} \vx \ra
\Bigr]}_{\geq 0}  \\[10pt]
& &
+ \; \ell'( \la \lambdav, \Theta^{\top} \vx \ra ) \cdot \la \vh, \mH^{\top} \vx \ra \\
\\
& \geq &
\frac{\alpha}{2} \| \mH \|_F^2 + \frac{\mu}{4} \| \vh \|_1^2
- \rho \| \vh \|_1 \cdot \| \mH \|_F \cdot \| \vx \| \\
\\
& \overset{(*)}{\geq} & 
\| \vh \|_1 \cdot \| \mH\|_F 
\cdot \Bigl( \sqrt{\frac{\alpha \mu}{2}} - \rho \| \vx \|  \Bigr)
\;\; \overset{(\ref{BoundAlphaStronglyConvex})}{\geq} \;\;
0,
\ea
$$
where we used Young's inequality in~$(*)$.
The bound for $g_2$ follows by the same reasoning, substituting 
$\ell(t) := \mu t$ and therefore setting
$\rho := \mu$.
\end{proof}

For the decoupled optimization formulation~(\ref{RelaxedProblem2})
it is natural to organize iterations in the following sequential order,
starting from an arbitrary $\Thetav_0 \in Q$ and $\Phiv_0 \in \Omega$,
for some $\mu > 0$:
\beq \label{ProcessDecoupled}
\boxed{
\ba{rcl}
\Lambdav_{k + 1} & = & \argmin\limits_{\Lambdav \in \Delta_{N}^n} 
F_{\mu}(\Thetav_k, \Phiv_k, \Lambdav), \\[10pt]
\Phiv_{k + 1} & = & \argmin\limits_{\Phiv \in \Omega} F_{\mu}(\Thetav_k, \Phiv, \Lambdav_{k + 1}), \\[10pt]
\Thetav_{k + 1} & = & \argmin\limits_{\Thetav \in Q} F_{\mu}(\Thetav, \Phiv_{k + 1}, \Lambdav_{k + 1}).
\ea
}
\eeq
Note that each minimization subproblem in~(\ref{ProcessDecoupled})
is convex and can be implemented very efficiently
by means of linear algebra and convex optimization.
At the same time, due to decoupling of variables and strong convexity we are able
to ensure the global convergence of this process
to the solution of~(\ref{RelaxedProblem2}).

\subsection{Convergence for Functional Residual}
\label{SubsectionStronglyConvexRate}

Note that in our decoupled optimization formulation~(\ref{RelaxedProblem2}),
variables $\Thetav$ and $\Phiv$ are independent of each other, when $\Lambdav$
is fixed. Therefore, the second and third step in iteration process~(\ref{ProcessDecoupled})
can be done independently.

For the sake of notation, let us denote $\mX \equiv \Lambdav$, concatenated variable 
$\mY \equiv (\Thetav, \Phiv)$,
and the objective in new variables as
$f(\mX, \mY) \equiv F_{\mu}(\Thetav, \Phiv, \Lambdav)$.
By our previous analysis, we can assume that $f$ is strongly convex.
We denote by $\mu$ the parameter of strong convexity
and by $L$ the constant of Lipschitz continuity of the gradient of $f$.
Its global minimum is denoted by $(\mX^{\star}, \mY^{\star})$,
and correspondingly $f^{\star} := f(\mX^{\star}, \mY^{\star})$.

Then, iteration process~(\ref{ProcessDecoupled}) can be rewritten simply as
the following alternating iterations, for $k \geq 0$:
$$
\ba{rcl}
\mY_{k + 1} & = & \argmin\limits_{\mY \in \mathcal{Y}} f(\mX_k, \mY), \\
\\
\mX_{k + 1} & = & \argmin\limits_{\mX \in \mathcal{X}} f(\mX, \mY_{k + 1}),
\ea
$$
where $\mathcal{X}$ and $\mathcal{Y}$ are the corresponding convex domains
($\mathcal{X} \equiv \Delta_{N}^n$ and $\mathcal{Y} \equiv \Omega \times Q$).

Then, the stationary condition for $\mY_{k + 1}$ (see, e.g., Theorem 3.1.23 in~\cite{nesterov2018lectures}) gives
\beq \label{StatCondPhi}
\ba{rcl}
\la \frac{\partial f}{ \partial \mY} (\mX_{k}, \mY_{k + 1}), 
\mY - \mY_{k + 1} \ra
& \geq & 0, \qquad \forall \mY \in \mathcal{Y}.
\ea
\eeq
Choosing 
\beq \label{GammaChoice}
\ba{rcl}
\gamma & := & \frac{\mu}{L} \;\; \leq \;\; 1,
\ea
\eeq
we obtain
$$
\ba{rcl}
& & 
\!\!\!\!\!\!\!\!\!\!\!\!\!\!\!\!\!\!\!\!\!\!\!\!\!\!
\gamma f(\mX^{\star}, \mY^{\star}) 
+ (1 - \gamma) f(\mX_{k}, \mY_{k + 1}) \\
\\
& \overset{(*)}{\geq} &
\gamma
\Bigl[
f(\mX_{k}, \mY_{k + 1})
+ \la \frac{\partial f}{\partial \mX}(\mX_{k}, \mY_{k + 1}),
\mX^{\star} - \mX_{k} \ra
+ \la \frac{\partial f}{\partial \mY}(\mX_{k}, \mY_{k + 1}),
\mY^{\star} - \mY_{k + 1} \ra
+ \frac{\mu}{2} \| \mX^{\star} - \mX_{k} \|^2 
\Bigr] \\
\\
& &
\; + \; (1 - \gamma) f(\mX_{k}, \mY_{k + 1}) \\
\\
& \overset{(\ref{StatCondPhi}), (\ref{GammaChoice})}{\geq}  &
f(\mX_{k}, \mY_{k + 1} )
+ \la \frac{\partial f}{\partial \mX}(\mX_{k}, \mY_{k + 1}),
\gamma(\mX^{\star} - \mX_k) \ra
+ \frac{L}{2}\| \gamma(\mX^{\star} - \mX_k) \|^2 \\
\\
& \geq &
\min\limits_{\mX \in \mathcal{X}}
\Bigl\{
f(\mX_{k}, \mY_{k + 1} )
+ \la \frac{\partial f}{\partial \mX}(\mX_{k}, \mY_{k + 1}),
\mX - \mX_k \ra
+ \frac{L}{2}\| \mX - \mX_k \|^2 
\Bigr\} \\
\\
& \overset{(**)}{\geq} & 
\min\limits_{\mX \in \mathcal{X}} \bigl\{  f(\mX, \mY_{k + 1}) \bigr \}
\;\; = \;\;
f(\mX_{k + 1}, \mY_{k + 1}),
\ea
$$
where in $(*)$ we used strong convexity, and in $(**)$ we used the Lipschitz continuity of the gradient.
Thus, we get the following inequality:
$$
\ba{rcl}
f(\mX_{k + 1}, \mY_{k + 1}) - f^{\star}
& \leq & \bigl(1 - \gamma\bigr)
\Bigl( f(\mX_k, \mY_{k + 1}) - f^{\star} \Bigr),
\ea
$$
and using the same reasoning for $\mY_k \mapsto \mY_{k + 1}$ update, we obtain
$$
\ba{rcl}
f(\mX_{k + 1}, \mY_{k + 1}) - f^{\star}
& \leq & \bigl(1 - \gamma\bigr)^2
\Bigl( f(\mX_k, \mY_{k}) - f^{\star} \Bigr),
\ea
$$
which is the global linear rate.
Thus, we have established formally the following convergence result.
\begin{theorem}
Let $f$ be strongly convex with constant $\mu > 0$, and let its gradient be Lipschitz continuous with constant $L > 0$. Then,
for $k \geq 0$ iteration of the alternating minimization process, we have
$$
\ba{rcl}
f(\mX_{k}, \mY_{k}) - f^{\star}
& \leq & \bigl(1 - \frac{\mu}{L} \bigr)^{2k}
\Bigl( f(\mX_0, \mY_{0}) - f^{\star} \Bigr).
\ea
$$
\end{theorem}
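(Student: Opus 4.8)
The plan is to reduce the whole theorem to one elementary fact: a single block minimization of the alternating process contracts the functional residual $f(\cdot,\cdot) - f^\star$ by the factor $1-\mu/L$. Since one outer iteration $(\mX_k,\mY_k)\mapsto(\mX_k,\mY_{k+1})\mapsto(\mX_{k+1},\mY_{k+1})$ consists of two such half-steps, composing the two contractions will give
$$
f(\mX_{k+1},\mY_{k+1}) - f^\star \;\le\; \Bigl(1-\tfrac{\mu}{L}\Bigr)^{2}\bigl(f(\mX_k,\mY_k) - f^\star\bigr),
$$
and unrolling the recursion over $k$ yields the stated $(1-\mu/L)^{2k}$ rate. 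So the only thing to prove is the per-half-step contraction.

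For the half-step, consider the update $\mX_{k+1}=\argmin_{\mX\in\mathcal{X}} f(\mX,\mY_{k+1})$; I would show $f(\mX_{k+1},\mY_{k+1}) - f^\star \le (1-\gamma)\bigl(f(\mX_k,\mY_{k+1}) - f^\star\bigr)$ with the interpolation weight tuned to $\gamma := \mu/L \le 1$. First write the constrained first-order optimality condition for the block minimizer $\mY_{k+1}$, namely $\langle \partial_\mY f(\mX_k,\mY_{k+1}),\, \mY - \mY_{k+1}\rangle \ge 0$ for all $\mY\in\mathcal{Y}$, and in particular at $\mY=\mY^\star$. Then lower-bound the convex combination $\gamma f^\star + (1-\gamma) f(\mX_k,\mY_{k+1})$: replace $f^\star = f(\mX^\star,\mY^\star)$ using the $\mu$-strong-convexity inequality centered at $(\mX_k,\mY_{k+1})$; the $\partial_\mY f$-term of that inequality is nonnegative by the stationarity condition, hence can be dropped, leaving $\langle \partial_\mX f(\mX_k,\mY_{k+1}),\,\mX^\star-\mX_k\rangle + \tfrac{\mu}{2}\|\mX^\star-\mX_k\|^2$. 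The choice $\gamma = \mu/L$ upgrades $\gamma\cdot\tfrac{\mu}{2}\|\mX^\star-\mX_k\|^2$ into $\tfrac{L}{2}\|\gamma(\mX^\star-\mX_k)\|^2$, so the lower bound becomes exactly the $L$-smooth quadratic model of $f(\cdot,\mY_{k+1})$ evaluated at the displacement $\gamma(\mX^\star-\mX_k)$; this is at least the minimum of that quadratic model over all feasible displacements, which by the descent (Lipschitz-gradient) lemma is at least $\min_{\mX\in\mathcal{X}} f(\mX,\mY_{k+1}) = f(\mX_{k+1},\mY_{k+1})$. Rearranging $\gamma f^\star + (1-\gamma) f(\mX_k,\mY_{k+1}) \ge f(\mX_{k+1},\mY_{k+1})$ gives the claimed half-step contraction.

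The same computation applies verbatim to the $\mY_k\mapsto\mY_{k+1}$ half-step with the roles of the two blocks swapped (it now uses that $\mX_k$ is a minimizer of $f(\cdot,\mY_k)$, which holds after the first outer iteration, while the first half-step is at worst monotone), giving $f(\mX_k,\mY_{k+1}) - f^\star \le (1-\gamma)\bigl(f(\mX_k,\mY_k) - f^\star\bigr)$; chaining the two estimates and iterating over $k$ proves the theorem. The one delicate point --- the main obstacle --- is the bookkeeping inside the interpolation step: one must verify that the constrained stationarity condition cancels precisely the block-gradient term of the strong-convexity bound, and that the weight $\gamma = \mu/L$ makes the coefficient of the squared norm land on exactly $L/2$, so that the descent lemma can be invoked in the reverse direction to close the chain. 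Everything else is routine, and the strong convexity of $f$ used here as a black box is exactly what the preceding Proposition guarantees for $F_{\mu}$ once $\alpha,\mu$ are taken large enough.
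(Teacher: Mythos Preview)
Your proposal is correct and follows essentially the same argument as the paper: the same interpolation weight $\gamma=\mu/L$, the same use of the constrained stationarity condition for the most recently updated block to kill the corresponding partial-gradient term in the strong-convexity lower bound, and the same closing step via the $L$-smooth quadratic upper model to reach $\min_{\mX\in\mathcal X} f(\mX,\mY_{k+1})$. You are in fact slightly more careful than the paper in flagging that the very first $\mY$-update relies on $\mX_0$ being a block minimizer (which need not hold for arbitrary $\mX_0$); the paper simply writes ``using the same reasoning for $\mY_k\mapsto\mY_{k+1}$'' and does not comment on this edge case.
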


Note that this result is directly applicable for our linear models from previous sections,
$f(\mX, \mY) \equiv F_{\mu}(\Thetav, \Phiv, \Lambdav)$,
as we show that objective~(\ref{RelaxedProblem2}) is jointly strongly convex, when the regularization parameter is sufficiently large.

\end{document}